\documentclass{amsart}

\usepackage{amsmath,amsthm,amscd,amssymb,eucal,mathrsfs}
\usepackage{amsfonts}
\usepackage{dsfont}
\usepackage{latexsym}
\usepackage{graphicx}
\usepackage{multicol}

\usepackage[all]{xy}

\newtheorem{thm}{Theorem}[section]

\newtheorem{lem}[thm]{Lemma}

\newtheorem{rem}[thm]{Remark}
\newtheorem{dfn}[thm]{Definition}

\newtheorem{exa}[thm]{Example}

\newtheorem{alg}[thm]{Algorithm}

\newtheorem*{Satz*}{Satz}

\newtheorem*{Ass}{Assumption}

\newcommand{\mathset}[1]{{\left\{#1\right\}}} 
\newcommand{\absolute}[1]{\left\lvert#1\right\rvert}

\newcommand{\children}{{\rm  ch}}
\newcommand{\Vertices}{{\rm Vert}}
\newcommand{\Ends}{{\rm Ends}}

\begin{document}

\title{On $p$-adic classification}

\author{Patrick Erik Bradley}
 
\date{\today}

\begin{abstract}
A $p$-adic modification of the split-LBG classification method is presented
in which first clusterings and then cluster centers  are computed which locally
minimise
an energy function. The outcome for a fixed dataset 
is independent of the prime number $p$ with
finitely many exceptions.
The methods are applied to the construction of
$p$-adic classifiers in the context of learning.
\end{abstract}

\maketitle

\section{Introduction}

The field $\mathds{Q}_p$ of  $p$-adic numbers is of interest  in hierarchical classification 
because of its inherent hierarchical structure \cite{Perruchet1982}. 
A great amount of work deals with finding $p$-adic data representation
(e.g.\ \cite{Murtagh-JoC2004, Murtagh2008}).

In \cite{Brad-JoC}, the use of more general  $p$-adic numbers for encoding
hierarchical
data was advocated in order to be able to include the case of non-binary
dendrograms
into the scheme without having to resort to a larger prime number $p$.
This was applied in \cite{Brad-TCJ} to the special case of data consisting
in words over a given alphabet and where proximity of words is defined by
the length of the common initial part. There,  an agglomerative 
hierarchic $p$-adic
clustering algorithm was described. However, the question of finding optimal
clusterings of $p$-adic data was not raised.

Already in \cite{B-PXK2001}, the performance of
 classical and $p$-adic classification algorithms was compared in the
 segmentation of moving images. It was observed that the $p$-adic ones were
often more efficient. Learning algorithms using $p$-adic neural networks are
described in \cite{B-PX2007,XT2009}.

\medskip
Inspired by  \cite{B-PXK2001},
our main concern in this article will be a $p$-adic adaptation of the
so-called split-LBG method  which finds energy-optimal
clusterings of data. The name ``LBG'' refers to the initials of the
authors of \cite{LBG1980}, where it is described first. Their method is to
 find cluster
centers, and then to group the data around the centers. In the next step,
the cluster centers are split, and more clusters are obtained. This process
is repeated until the desired class number is attained.
For $p$-adic data, this approach does not make sense: 
first of all, cluster centers are in general not unique; and secondly,
because the dendrogram
is already determined by data, an arbitrary choice of cluster centers is not
possible---this can lead to incomplete clusterings. Hence, we first find
clusterings by refining in the direction of highest energy reduction,
until the class number exceeds a prescribed bound.
Thereafter, candidates for cluster centers are computed: they minimise
the cluster energy. The result is a sub-optimal method for 
$p$-adic classification which splits a given cluster into its maximal proper
subclusters.
A variant discards first all quasi-singletons, i.e.\ clusters of  energy
below a threshold value. 
The {\em a posteriori} choice of centers turns out useful for 
constructing 
classifiers.

\medskip
A first application of some of the methods described here to event history data of building
stocks is described in \cite{Brad-buildevp}. There, 
the classification algorithm is performed on different $p$-adic encodings
of the data in order to compare the dynamics of some sampled municipal building stocks.

\medskip
After introducing notations in Section \ref{sec-general}, we briefly describe
the
classical split-LBG method in Section \ref{sec-LBG}.
Section \ref{sec-LBGp} reformulates the minimisation task of split-LBG in 
the $p$-adic setting, and describes the corresponding algorithms.
The issue on the choice of the prime $p$ is dealt with in Section
\ref{anyprime}.
 Section \ref{sec-learn}  constructs classifiers and presents an adaptive
learning method in which accumulated clusters of large energy are split.

\section{Generalities} \label{sec-general}

\subsection{$p$-adic numbers}
Let $p$ be a prime number, and
$K$ a  field which is a finite extension field of the field $\mathds{Q}_p$ of
rational $p$-adic numbers. We call the elements of $K$ simply $p$-adic numbers. 
$K$ is a normed field whose norm $\absolute{\ }_K$ extends the $p$-adic norm
$\absolute{\ }_p$ on $\mathds{Q}_p$. 
Let
$\mathcal{O}_K:=\mathset{x\in K\mid \absolute{x}_K\le 1}$ denote the local ring of integers of
$K$. Its maximal ideal $\mathfrak{m}_K=\mathset{x\in K\mid \absolute{x}_K<1}$
 is generated by
a {\em uniformiser} $\pi$. It has the property $v(\pi)=\frac{1}{e}$, 
where $e\in\mathds{N}$ is the ramification degree of $K/\mathds{Q}_p$. 

\smallskip
All elements  $x\in K$ have a $\pi$-adic expansion
\begin{align}
x=\sum\limits_{i\ge-m} \alpha_i\pi^i \label{pi-adic-expansion}
\end{align}
with coefficients $\alpha_i$ in some set $\mathcal{R}\subseteq K$ of
representatives for the residue field
$O_K/\mathfrak{m}_K\cong\mathds{F}_{p^f}$.  
In the case $q=p$, the choice $\mathcal{R}=\mathset{0,1,\dots,p-1}$ 
is quite often made.

\smallskip
By $X$ will will always mean a finite set of data taken from $K$.

\subsection{$p$-adic clusters}

A {\em disk} in some finite set $X\subseteq K$ is a subset of the form
$$
\mathset{x\in X\mid \absolute{x-a}_K<\varepsilon}
$$
for some $a\in X$ and $\varepsilon>0$.
In particular, any singleton $\mathset{x}\subseteq X$ is a disk in $X$.

\medskip
The {\em cluster property} of a subset $C$ of $p$-adic data $X\subseteq K$ is given by saying that
for any $a\in C$ 
it holds true that
\begin{align}
\absolute{x-a}_K<\mu(C) \Rightarrow x\in C, \label{clusterproperty}
\end{align}
where 
$$
\mu(C):=\max\mathset{\absolute{x-y}_K\mid x,y\in C}
$$
is the cluster diameter.
As a consequence, a cluster is a union of disks in $X$.
We will call a disk in $X$ also a {\em verticial cluster},
because in the in the dendrogram for $X$, the vertices correspond to those
clusters which are (non-singleton)\footnote{In many definitions of 
dendrograms, the data correspond to terminal vertices, but in
our definition in Section \ref{somedefs}, data are not considered
as vertices of the dendrogram. Nevertheless, we do not exlude singleton clusters
from the definition of ``vertcial''. We apologise for this inconsistency.}  disks. 
More to the dendrogram associated to $p$-adic
data will be said in Section \ref{somedefs}. In Figure \ref{non-clust}
 the ultrametric property of dendrograms is visualised
 as follows: data  $b,c$ connected by a  path consisting of  vertical and  horizontal line segments
are considered as near, if the
sum of the  vertical parts is short. A third datum $a$ further away from
 $b$ and $c$ is, by ultrametricity, at equal distance to $b$ and $c$.
This fact is visualised by having  paths $a\leadsto b$ and $a\leadsto c$
with  vertical components summing up to equal length.

\begin{exa}
Let $X=\mathset{a,b,c}$, and consider the subset $C=\mathset{a,b}$.
In Figures \ref{non-clust} and \ref{nonvertclust}, we assume two different
dendrograms
for our data $X$. 
In Figure \ref{non-clust}, the disks are the singletons,
the set $\mathset{b,c}$, and the whole dataset $X$.
Hence, $C$ is not a cluster in the case of Figure
\ref{non-clust}, because it does not satisfy the cluster property (\ref{clusterproperty}): $b$ and $c$ are at distance less than the diameter which equals
the distance between $a$ and $b$, whereas $C$ contains
$b$ but not $c$.
However, in Figure \ref{nonvertclust}, 
all data are at equal distance, so the only disks are the singletons 
and $X$.
Hence,  $C$ is a cluster in Figure \ref{nonvertclust}, 
 but not a disk,
i.e.\ not verticial.

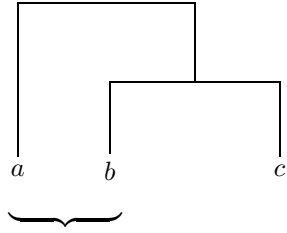
\begin{figure}
$$
\begin{array}{c}
\xymatrix{
*\txt{}\ar@{-}[rr]\ar@{-}[dd]&&*\txt{}\ar@{-}[d]&\\
&*\txt{}\ar@{-}[rr]\ar@{-}[d]&*\txt{}&*\txt{}\ar@{-}[d]\\
a&b&&c\\
}
\\
\hspace*{-22mm}
\underbrace{\hspace*{15mm}}
\end{array}
$$
\caption{Dendrogram in which 
$b,c$ are closer to each other than to $a$. It contains a subset which is not a cluster.} \label{non-clust}
\end{figure}

\begin{figure}
$$
\begin{array}{c}
\xymatrix{
*\txt{}\ar@{-}[rr]\ar@{-}[d]&*\txt{}\ar@{-}[d]&*\txt{}\ar@{-}[d]\\
a&b&c
}
\\
\hspace*{-11mm}\underbrace{\hspace*{16mm}}
\end{array}
$$
\caption{Dendrogram with equidistant data   contains  non-ver\-ti\-cial clusters.} \label{nonvertclust}
\end{figure}
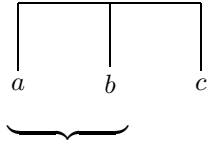
\end{exa}

A {\em clustering} of $X$ is a collection $\mathscr{C}$ of
disjoint clusters of $X$ whose union is the whole dataset $X$.
It is called {\em verticial}, if it consists entirely of verticial clusters.

\medskip
Notice that the definition of  cluster depends on the dataset $X$.
In particular, a non-verticial cluster can be made into a disk by
deleting some data from $X$. E.g.\ in Figure \ref{nonvertclust}
the removal of $c$ from the dataset turns
 $C=\mathset{a,b}$ into a verticial cluster. 
In general, if $\mathscr{C}$ is a clustering of $X$, and $Y\subseteq X$,
then $\mathscr{C}_Y:=\mathset{C\cap Y\mid C\in\mathscr{C}}$
is the {\em restriction of $\mathscr{C}$ to $Y$}.
This motivates us to consider only the case of verticial clusterings.

\begin{Ass}
All clusterings we consider are verticial on some specified (non-empty) subsets
of $X$.
\end{Ass}

\section{The split-LBG algorithm} \label{sec-LBG}
Here, we review briefly the classical split-LBG algorithm. Details can be
found in \cite{LBG1980}.

\bigskip
Let $X=\mathset{a_1,\dots,a_n}$ and $C=\mathset{c_1,\dots,c_k}$  be sets of
vectors in $\mathds{R}^m$,
where $X$ is considered as the {\em data} and $C$ are the prespecified 
{\em cluster centers}. The task is classically to find a partition
$\mathscr{O}=\mathset{\Omega_c\mid c\in C}$ of $X$
into $k$ clusters $\Omega_c$ minimising the energy
$$
E(\mathscr{O},C)=\sum\limits_{c\in C}\sum\limits_{a\in\Omega_c}d(a,c),
$$
where $d(x,y)$ is Euclidean distance in $\mathds{R}^m$.
In fact, the split-LBG method works with varying $C$ by
alternatively constructing partitions and then replacing each $c\in C$ by
two new centers $c+{\bf \varepsilon},c-{\bf \varepsilon}$, where ${\bf
  \varepsilon}$ is a 
perturbation vector in $\mathds{R}^m$ of small norm. 
From these, a new partition is constructed, etc.

\section{Split-LBG in the $p$-adic case} \label{sec-LBGp}
In \cite{B-PXK2001} it was observed that the split-LBG method has no direct
translation using the $p$-adic metric.
Here, we describe a $p$-adic modification of the task from the previous section.

\bigskip
Let $X=\mathset{x_1,\dots,x_n}\subseteq K$ be some data consisting of $n$
$p$-adic numbers, and fix a number $k$. 
The task is to find a clustering $\mathscr{C}=\mathset{C_1,\dots,C_\ell}$
of $X$ with $\ell\le k$, and for each cluster $C\in\mathscr{C}$ a {\em center} $a_C\in C$, minimising the expression
$$
E_p(X,\mathscr{C},{\bf a}):=\sum\limits_{C\in\mathscr{C}} \sum\limits_{x\in C}\absolute{x-a_C}_K,
$$ 
where ${\bf a}=(a_C)_{C\in\mathscr{C}}$ is the sequence of cluster centers.

\medskip
Note that, by the ultrametric
property of $\absolute{\ }_K$,  cluster centers
can (and will) always be chosen within $X$. This has already been taken care
of in the definition of the task. Note further that, unlike in the Archimedean
setting, cluster centers are in general not uniquely defined by their
corresponding clusters.


\medskip
The most significant difference to the Archimedean case is
given by the fact that
in the $p$-adic situation, it does not make sense to choose a cluster center
{\em a priori}, as illustrated in Example \ref{clustcentre}.
Therefore, the  order is reversed: first find a good partition, and then find
corresponding
cluster centers.

\begin{exa} \label{clustcentre}
Let $\mathset{a,b,c}$ be some data with corresponding dendrogram as in Figure
\ref{non-clust}.
Then choosing $a,b$ as centers leads to the clustering
$\mathscr{C}=\mathset{\mathset{a},\mathset{b,c}}$,
whereas the choice $b,c$ leads either to
$\mathscr{C}'=\mathset{\mathset{b,c}}$,
 $\mathscr{C}''=\mathset{\mathset{a,b,c}}$, or to
$\mathscr{C}'''=\mathset{\mathset{b},\mathset{c}}$. But $\mathscr{C}'$
and $\mathscr{C}'''$ are not clusterings
of $\mathset{a,b,c}$, while $\mathscr{C}''$ is.
And both $\mathscr{C}'$ and $\mathscr{C}''$ each consist of one
cluster
containing the two prescribed centers instead of two distinct clusters as
should be the case classically.
\end{exa}

Last but not least, we will not give a global solution to the task in the $p$-adic case,
but
find certain types of local minima of $E_p$ in a sense which will become clear in the
following subsection.

\subsection{Some definitions} \label{somedefs}

An important tool in the classification of $p$-adic data
 $X\subseteq K$ is its
dendrogram $D(X)$.  In contrast to the Archimedean situation, it is uniquely determined by the data
(cf.\ \cite{Brad-JoC,Brad-TCJ}).
We view $D(X)$ as a {\em rooted metric tree}.
This means that it has a  root $v_0$, and all edges are oriented away from 
$v_0$ and are assigned a length which is either positive real or infinite.  
The root $v_0$ corresponds to the top cluster consisting of the whole data
$X$. 
The vertices correspond to clusters containing at least
two points from $X$. 
An edge $e$ of $D(X)$  
connecting two vertices is always bounded. 
The individual points of $X$ correspond uniquely to the {\em ends}
of the tree $D(X)$. We do not view the data $X$ as part of the tree $D(X)$, but
as its boundary. Hence, any $x\in X$ sits at the one extreme of an unbounded edge.
Our viewpoint is probably in contrast to most others 
on hierarchical classification, where data correspond to terminal vertices
of dendrograms.
However, we argue in our favour
that the dendrogram should reflect hierarchic   
approximations of data   by  clusters (vertices in $D(X)$)
or, more generally,  by initial terms in some $p$-adic expansion for data
(points in $D(X)$). 
We refer to
\cite{Brad-JoC,Brad-TCJ}
for a more detailed description of $p$-adic dendrograms.

 
\smallskip
Given some vertex $v$ of $D(X)$, let 
$\children(v)$  denote the set of edges emanating from $v$ (i.e.\ not towards
$v_0$),
and let $\#\children(v)$ be its cardinality. By abuse of notation, we will
identify
$\children(v)$ with the set of vertices and ends attached to the edges in $\children(v)$.

Now, an upper bound for the contribution to $E_p$ of a cluster $C_v$, represented by some
vertex or end $v$ is
$$
\mu(v):=\mu(C_v)=\max\mathset{\absolute{x-y}_K\mid x,y\in C_v}.
$$
As a side remark, note that this is nothing but the Haar measure of $K$ evaluated in the $p$-adic
disk $D_v\subseteq K$ corresponding to $v$. In any case, if $v$ is an end then $\mu(v)=0$,
otherwise $\mu(v)>0$.

Given a set $V$ of vertices or ends  of $D(X)$, we set
\begin{align}
E(V):=\sum\limits_{v\in V}(\#C_v-1)\cdot\mu(v), \label{verclusten}
\end{align}
and also write $E(v_1,\dots,v_b)$ in the case that
$V=\mathset{v_1,\dots,v_b}$.
Applying this to $\children(v)$ for a vertex $v$, we obtain:
\begin{align}
E(\children(v))\le E(v). \label{children-diminish-energy}
\end{align}

The following remark shows that minimising $E(V)$ does make sense for our task:

\begin{rem}
Given a clustering $\mathscr{C}=\mathset{C_v\mid v\in V}$, where $V$ is the
corresponding set of vertices,  for any choice of
$\alpha_v\in C_v$ it holds true that
$$
E_p(X,\mathscr{C},{\bf a})\le E(V)=:E(\mathscr{C}),
$$
where ${\bf a}=(\alpha_v)_{v\in V}$.
\end{rem}

Let $\mathfrak{X}_k(Y)$ be the set of all clusterings $\mathscr{C}$ of $X$
with cardinality $\ell\le k$ whose restriction to $Y$ is verticial. On the set
\begin{align}
\mathfrak{X}=\bigcup\limits_{k\in \mathds{N}}\bigcup\limits_{Y\subseteq
  X}\mathfrak{X}_k(Y), \label{setofclusts}
\end{align}
of all clusterings, we define a partial ordering $\le$ (called {\em refinement}) as follows:
$$
\mathscr{C}'\le \mathscr{C},
$$
if all $C\in\mathscr{C}$ are of the form
$
C=\bigcup\limits_{i\in I} C'_i$
with $C'_i\in\mathscr{C}'$
$(i\in I)$.


\smallskip
Let $C_v$ be the smallest verticial cluster containing a given cluster $C$.
Then we can define the functional
$$
E\colon \mathfrak{X}\to\mathds{R},\;\mathscr{C}\mapsto\sum\limits_{C\in\mathscr{C}}(\#C-1)\cdot\mu(C_v),
$$
and observe that this obviously generalises $E(V)$ from (\ref{verclusten}):

\begin{lem} 
If $\mathscr{C}\in\mathfrak{X}$ is verticial, then
$$
E(V)=E(\mathscr{C}), 
$$
where $V$ is the vertex set associated to $\mathscr{C}$.
\end{lem}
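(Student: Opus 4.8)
The plan is to unwind the two definitions of $E$ and check they agree term by term. Concretely, suppose $\mathscr{C}$ is verticial with associated vertex set $V$, so that $\mathscr{C} = \{C_v \mid v \in V\}$ and each $C_v$ is itself a disk, i.e.\ a verticial cluster. The functional $E$ on $\mathfrak{X}$ sends $\mathscr{C}$ to $\sum_{C \in \mathscr{C}} (\#C - 1)\cdot\mu(C_w)$, where $C_w$ is the smallest verticial cluster containing $C$. The quantity $E(V)$ from (\ref{verclusten}) is $\sum_{v \in V} (\#C_v - 1)\cdot\mu(v) = \sum_{v \in V}(\#C_v - 1)\cdot\mu(C_v)$.

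First I would observe that the index sets match: the map $v \mapsto C_v$ is a bijection between $V$ and $\mathscr{C}$ (this is exactly what ``$V$ is the vertex set associated to $\mathscr{C}$'' means). So it suffices to show that each summand of one expression equals the corresponding summand of the other, i.e.\ for $C = C_v \in \mathscr{C}$ we must check $(\#C_v - 1)\cdot\mu((C_v)_w) = (\#C_v - 1)\cdot\mu(C_v)$, where $(C_v)_w$ denotes the smallest verticial cluster containing $C_v$. The only thing to verify is therefore $\mu((C_v)_w) = \mu(C_v)$, and this is immediate: since $\mathscr{C}$ is verticial, $C_v$ is itself a disk in $X$, hence $C_v$ is the smallest verticial cluster containing itself, so $(C_v)_w = C_v$ and the diameters trivially coincide.

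There is really no main obstacle here; the lemma is a bookkeeping check that the general functional restricts correctly. The one point deserving a sentence of care is the edge case of singleton clusters: if $C_v = \{x\}$ is a singleton end, then $\#C_v - 1 = 0$ and the term vanishes on both sides regardless of how one interprets $\mu$, so the identity holds vacuously; and by the stated convention singletons are still counted as (verticial) disks, so the bijection $v \mapsto C_v$ and the claim ``$C_v$ is its own smallest verticial cluster'' remain valid in that case too. I would write the proof as: the two sums are indexed by the same set via $v \leftrightarrow C_v$; verticiality gives $C_v = (C_v)_w$ for each $v$; hence $\mu(C_v) = \mu((C_v)_w)$ and the summands agree, so $E(V) = E(\mathscr{C})$.
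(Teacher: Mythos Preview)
Your proof is correct and is exactly the intended argument: the paper itself gives no proof for this lemma, merely introducing it with ``observe that this obviously generalises $E(V)$,'' and your unwinding of the definitions---the bijection $v\leftrightarrow C_v$ together with the observation that a verticial $C_v$ is its own smallest containing disk so that $\mu((C_v)_w)=\mu(C_v)$---is precisely what makes this obvious. Your remark on the singleton edge case is a nice touch but not strictly needed, since both summands vanish there anyway.
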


\begin{lem} \label{monotonics}
$E$ is strictly monotonic:
\begin{align*}
\mathscr{C}'\le\mathscr{C}\Rightarrow E(\mathscr{C}')\le E(\mathscr{C}), 
\end{align*}
and if $\mathscr{C}'\le\mathscr{C}$ are not equal, then $E(\mathscr{C}')< E(\mathscr{C})$.
\end{lem}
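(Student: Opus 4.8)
The plan is to expand both sides straight from the definition of $E$ and compare the contributions cluster by cluster. Fix $\mathscr{C}'\le\mathscr{C}$ in $\mathfrak{X}$. By the definition of refinement each $C\in\mathscr{C}$ is a disjoint union $C=\bigcup_{i\in I_C}C'_i$ with $C'_i\in\mathscr{C}'$, and since both $\mathscr{C}$ and $\mathscr{C}'$ partition $X$, the sets $C'_i$ $(i\in I_C)$ are exactly the members of $\mathscr{C}'$ contained in $C$. Hence
$$E(\mathscr{C}')=\sum_{C\in\mathscr{C}}\sum_{i\in I_C}(\#C'_i-1)\cdot\mu\bigl((C'_i)_v\bigr),$$
and it suffices to prove that for each fixed $C$ the inner sum is at most $(\#C-1)\cdot\mu(C_v)$.

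This rests on two elementary facts. First, because the $C'_i$ partition $C$ one has $\sum_{i\in I_C}\#C'_i=\#C$, so $\sum_{i\in I_C}(\#C'_i-1)=\#C-\#I_C\le\#C-1$. Second, $C'_i\subseteq C\subseteq C_v$ and $C_v$ is a disk in $X$; since in an ultrametric space any two disks are either nested or disjoint, the smallest disk $(C'_i)_v$ containing $C'_i$ is contained in $C_v$, and therefore $\mu\bigl((C'_i)_v\bigr)\le\mu(C_v)$, $\mu$ being monotone under inclusion. Combining, the inner sum is at most $\bigl(\sum_{i\in I_C}(\#C'_i-1)\bigr)\mu(C_v)=(\#C-\#I_C)\mu(C_v)\le(\#C-1)\mu(C_v)$. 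Summing over $C\in\mathscr{C}$ gives $E(\mathscr{C}')\le E(\mathscr{C})$.

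For strictness, assume $\mathscr{C}'\ne\mathscr{C}$. Since $\mathscr{C}'$ refines $\mathscr{C}$ and both partition $X$, there is some $C\in\mathscr{C}$ with $\#I_C\ge 2$; then $\#C\ge 2$, so $C$ contains two distinct points and $\mu(C_v)\ge\mu(C)>0$. For that $C$ the estimate sharpens to $\#C-\#I_C\le\#C-2<\#C-1$, whence its inner sum is strictly below $(\#C-1)\mu(C_v)$, while every other cluster still contributes at most its old value; summing yields $E(\mathscr{C}')<E(\mathscr{C})$.

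I expect no genuine obstacle here; the only step meriting care is the inclusion $(C'_i)_v\subseteq C_v$ together with monotonicity of $\mu$, which is precisely the ultrametric nesting of disks that also makes ``the smallest verticial cluster containing a given cluster'' well defined. As a check, taking $\mathscr{C}=\{C_v\}$ and $\mathscr{C}'=\{C_w\mid w\in\children(v)\}$ recovers (\ref{children-diminish-energy}); the rest is bookkeeping with cardinalities of the partition of $C$.
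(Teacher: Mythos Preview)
Your proof is correct and follows essentially the same route as the paper's: decompose each $C\in\mathscr{C}$ into its pieces from $\mathscr{C}'$, use $\mu\bigl((C'_i)_v\bigr)\le\mu(C_v)$, and then the counting identity $\sum_i(\#C'_i-1)=\#C-\#I_C\le\#C-1$. You are slightly more explicit than the paper in justifying the inclusion $(C'_i)_v\subseteq C_v$ via ultrametric nesting and in noting that $\mu(C_v)>0$ is needed for the strict inequality, but there is no substantive difference in strategy.
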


\begin{proof}
Assume $C=\bigcup\limits_{i\in I}C_i'\in\mathscr{C}$ with $C_i'\in\mathscr{C}'$.
Then
$$
\sum\limits_{i\in I}\#(C_i'-1)\cdot\mu(C'_{i,v})
\le 
\sum\limits_{i\in I}\#(C_i'-1)\cdot\mu(C_v)
\le
(\#C-1)\cdot\mu(C_v),
$$
where the first inequality holds true, because all $C_i'$ are contained in
$C$. The second inequality is strict, if $I$ contains more than one element.
That is the case for some $C$, if $\mathscr{C}\neq\mathscr{C}'$.
\end{proof}

We denote by $E_{k,Y}$
the restriction of $E$ to
$\mathfrak{X}_k(Y)$%
. The following is immediate:

\begin{lem}
Let $\mathscr{C}$ and $\mathscr{C}'$ minimise $E_{k,Y}$ and $E_{k',Y}$, respectively.
Then 
$$
k\le k'\Rightarrow E(\mathscr{C}')\le E(\mathscr{C}).
$$
\end{lem}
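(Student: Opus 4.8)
The plan is to reduce the statement to a trivial inclusion of domains together with the fact that a minimum over a larger set is no larger than a minimum over a subset. First I would observe that the only data entering the definition of $\mathfrak{X}_k(Y)$ are the cardinality bound $\ell\le k$ and the requirement that the restriction to $Y$ be verticial; the latter condition does not involve $k$ at all. Hence, whenever $k\le k'$, any clustering $\mathscr{C}\in\mathfrak{X}_k(Y)$ has $\ell\le k\le k'$ and the same verticiality property, so $\mathscr{C}\in\mathfrak{X}_{k'}(Y)$. This gives the inclusion
$$
\mathfrak{X}_k(Y)\subseteq\mathfrak{X}_{k'}(Y).
$$

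Next I would invoke that $E_{k,Y}$ and $E_{k',Y}$ are by definition the restrictions of the single functional $E$ to $\mathfrak{X}_k(Y)$ and $\mathfrak{X}_{k'}(Y)$ respectively. Since $X$ is finite, both domains are finite (and non-empty, as the hypothesis that $\mathscr{C}$, $\mathscr{C}'$ are minimisers presupposes), so the minima exist. From the inclusion above,
$$
E(\mathscr{C}')=\min_{\mathscr{D}\in\mathfrak{X}_{k'}(Y)}E(\mathscr{D})\le\min_{\mathscr{D}\in\mathfrak{X}_k(Y)}E(\mathscr{D})=E(\mathscr{C}),
$$
which is exactly the claimed inequality. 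In particular one may simply take $\mathscr{C}$ itself as a competitor in the minimisation defining $\mathscr{C}'$.

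There is essentially no obstacle here: the content is purely set-theoretic, and Lemma \ref{monotonics} is not even needed. The only point that deserves a word is that the verticiality constraint is carried along unchanged when enlarging $k$, so that the inclusion of domains genuinely holds; once that is noted, the conclusion is immediate.
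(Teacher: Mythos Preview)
Your argument is correct and is precisely the intended one: the paper itself gives no proof beyond the remark ``The following is immediate,'' and the immediacy is exactly the inclusion $\mathfrak{X}_k(Y)\subseteq\mathfrak{X}_{k'}(Y)$ for $k\le k'$ together with the trivial fact that minimising over a larger domain can only decrease the minimum. There is nothing to add.
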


%

%
%

\subsection{The verticial clustering algorithm} \label{sec-pclust}
The general strategy which we follow is to refine a given clustering of
$X$ in the ``direction'' which yields the lowest value of $E_p$ after
splitting
a vertex. The term ``direction'' refers to the refinement ordering on
$\mathfrak{X}$,
and we follow the possible ``gradients'' from a given point $\mathscr{C}\in\mathfrak{X}$.
Concretely, this means splitting a vertex with highest energy contribution.
In Section \ref{anyprime}, we will see that the  terms in quotation marks here
can be taken ad literam.

\medskip
In this subsection, we deal with 
verticial clusterings only.
We can now formulate:

\begin{alg}[Verticial clustering]  \rm \label{minclustp}
{\em Input}. $p$-adic data $X\subseteq K$ with $\#X \ge 2$, and upper bound
$k\ge 1$ for number of
clusters. 

\medskip\noindent
{\em Step $0$}. Compute $b=\#\children(v_0)$ and $E(v_0)=\mu(v_0)$.

\medskip\noindent
{\em Step $1$}. If $b>k$, then terminate.
Otherwise, compute $E(\children(v_0))$ which is not greater than $ E(v_0)$ by
(\ref{children-diminish-energy}). Further identify the set of vertices
$V_1:=\children(v_0)\cap \Vertices(D(X))$.

\medskip\noindent
{\em Step $N$}. Assume that from the previous step, we are given some family
 $\mathscr{V}_{N-1}=\mathset{V_{N-1}^{(i)}}$ of sets consisting of
$b_{N-1}^{(i)}\le k$ vertices, respectively. If for all
$i$ and all $v\in V_{N-1}^{(i)}$ it holds true that $b_v^{(i)}:=b_{N-1}^{(i)}+\#\children(v)>k$,
then terminate.

Otherwise, find all $i$ and all $v\in V_{N-1}^{(i)}$ such that
$E(W_v^{(i)})$ 
is smallest possible,
where $W_v^{(i)}:=\children(v)\cup V_{N-1}^{(i)}\setminus\mathset{v}$
satisfies $\#W_v^{(i)}\le k$.
Again, by (\ref{children-diminish-energy}), it holds true that
$$
E(W_v^{(i)})\le E(V_{N-1}^{(i)}).
$$
Extract this new family $\mathscr{V}_N$ of  vertex sets together with the
lower energy value $E_N=E(W)$ for $W\in\mathscr{V}_N$.

\medskip\noindent
{\em Output.} A family of clusterings $\mathset{\mathscr{C}_{i}\mid i\in I}$ (corresponding to the vertex sets in the
last step) for which $E=E(\mathscr{C})$ is locally minimal, together with the value of $E$.  
\end{alg}

\subsection{$p$-adic cluster centers}

The next objective
 is to find cluster centers
with respect to the energy functional. Assume that we are given a fixed cluster 
$C=\mathset{a_1,\dots,a_n}\subseteq K$. We wish to find some $\alpha\in C$
which minimises
$$
\epsilon(\alpha):=E_p(C,\mathscr{C},\alpha)=\sum\limits_{a\in C}\absolute{a-\alpha}_K,
$$
where $\mathscr{C}=\mathset{C}$.

A {\em branch} $B$ of a rooted tree $(T,v)$ is a maximal subtree of $T\setminus\mathset{v}$.
It has a root $v_B$ among the vertices of $\children(v)$.
Let $\mathcal{B}(T)$ denote the set of branches of $(T,v)$. In the case of our
dendrogram
$D(C)$, we will write $\mathcal{B}(C)$, instead of $\mathcal{B}(D(C))$.
The branches induce a natural partition of $C$:
$$
C=\bigcup\limits_{B\in\mathcal{B}(C)}C_B
$$
into a disjoint union of  $C_B=\Ends(B)$.

\begin{lem} \label{lem-energy-branch}
Let $\alpha\in C$, and $B_\alpha\in \mathcal{B}(C)$ the branch containing
$\alpha$ as an end, and $C_\alpha=C_{B_\alpha}$. Then
\begin{align} \label{energy-branch}
\epsilon(\alpha)=\#(C\setminus C_\alpha)\cdot\mu(v_0)+E_p(C_\alpha,\mathscr{C}_\alpha,\alpha),
\end{align}
where $\mathscr{C}_\alpha=\mathset{C_\alpha}$.
\end{lem}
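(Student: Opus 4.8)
The plan is to split the sum defining $\epsilon(\alpha)$ according to the branch decomposition $C=\bigcup_{B\in\mathcal{B}(C)}C_B$ and to exploit the ultrametric inequality together with the structure of the dendrogram $D(C)$. Write
$$
\epsilon(\alpha)=\sum_{a\in C}\absolute{a-\alpha}_K
=\sum_{a\in C_\alpha}\absolute{a-\alpha}_K+\sum_{a\in C\setminus C_\alpha}\absolute{a-\alpha}_K.
$$
The first sum is exactly $E_p(C_\alpha,\mathscr{C}_\alpha,\alpha)$ by definition, since $\mathscr{C}_\alpha=\mathset{C_\alpha}$, so the real content lies in showing that the second sum equals $\#(C\setminus C_\alpha)\cdot\mu(v_0)$.

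The key observation is that whenever $a$ lies in a branch $B\neq B_\alpha$, the datum $a$ and the center $\alpha$ belong to different children of the root $v_0$; hence the smallest verticial cluster containing both $a$ and $\alpha$ is $C_{v_0}=C$ itself. By ultrametricity, $\absolute{a-\alpha}_K$ then equals the diameter of that smallest common cluster, which is $\mu(v_0)$. (Here I would invoke the standard fact — implicit in the dendrogram formalism recalled in Section \ref{somedefs} and in \cite{Brad-JoC,Brad-TCJ} — that for $x,y\in C$ the distance $\absolute{x-y}_K$ is the diameter $\mu$ of the smallest disk/verticial cluster containing both, equivalently the "height" in the tree at which the paths to $x$ and $y$ from the root first separate.) Consequently every one of the $\#(C\setminus C_\alpha)$ terms in the second sum contributes exactly $\mu(v_0)$, giving the claimed product; adding the first sum yields \eqref{energy-branch}.

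I expect the main obstacle to be purely expository rather than mathematical: one must be careful that the branch $B_\alpha$ is well defined (it is, since $\alpha$ is an end of $D(C)$ and lies in a unique maximal subtree of $D(C)\setminus\mathset{v_0}$, this being where the hypothesis $\#X\ge 2$, here $\#C\ge 2$, is used so that $v_0$ is an honest vertex), and that the claimed identity $\absolute{a-\alpha}_K=\mu(v_0)$ for $a\notin C_\alpha$ genuinely uses that two distinct children of $v_0$ are "as far apart as possible" inside $C$ — i.e.\ that $\mu(v_0)=\max\mathset{\absolute{x-y}_K\mid x,y\in C}$ is attained precisely by pairs straddling different branches. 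A small degenerate case worth a remark: if $C=C_\alpha$ (all data in one branch, which cannot happen here since branches partition $C$ and $\mathcal{B}(C)$ has at least two elements when $\#C\ge 2$ and $D(C)$ is the dendrogram), the second term vanishes and the formula is trivially the identity. With these points addressed, the proof is a one-line computation after the ultrametric reduction.
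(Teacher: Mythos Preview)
Your proposal is correct and follows essentially the same approach as the paper: split the defining sum for $\epsilon(\alpha)$ into the contribution from $C_\alpha$ (identified as $E_p(C_\alpha,\mathscr{C}_\alpha,\alpha)$ by definition) and the contribution from $C\setminus C_\alpha$, then use the dendrogram/ultrametric structure to see that each term of the latter equals $\mu(v_0)$. The paper's own proof is a one-liner (``this follows easily by looking at the tree $D(C)$''), and your write-up simply makes explicit the ultrametric reasoning it leaves implicit.
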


\begin{proof}
Together with the identity:
$$
\sum\limits_{a\in C_\alpha}\absolute{a-\alpha}_K=E_p(C_\alpha,\mathscr{C}_\alpha,\alpha),
$$
this follows easily by looking at the tree $D(C)$. 
\end{proof}

\begin{lem} 
Assume the notations as in Lemma \ref{lem-energy-branch}.
It holds true that
\begin{align}
\frac{\epsilon(\alpha)}{\mu(v_0)}=N_\alpha+O(p^{\nu_\alpha}) \label{O4epsilon}
\end{align}
with $N_\alpha\in\mathds{N}$ and $\nu_\alpha<0$.
\end{lem}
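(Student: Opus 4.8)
The plan is to unwind the recursive structure of the formula \eqref{energy-branch} from Lemma \ref{lem-energy-branch} and track the $p$-adic valuation of each term. First I would normalise: dividing \eqref{energy-branch} by $\mu(v_0)$ gives
\[
\frac{\epsilon(\alpha)}{\mu(v_0)}=\#(C\setminus C_\alpha)+\frac{E_p(C_\alpha,\mathscr{C}_\alpha,\alpha)}{\mu(v_0)}.
\]
The first summand is a nonnegative integer, so it already contributes to $N_\alpha$. For the second summand, observe that $E_p(C_\alpha,\mathscr{C}_\alpha,\alpha)=\sum_{a\in C_\alpha}\absolute{a-\alpha}_K$, and every term here is $\le\mu(v_{B_\alpha})<\mu(v_0)$; moreover each $\absolute{a-\alpha}_K$ is an integer power of $p^{1/e}$ (where $e$ is the ramification degree), and in fact $\mu(v_0)$ is itself such a power, say $\mu(v_0)=p^{-r/e}$. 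The key point is that the ratio $\absolute{a-\alpha}_K/\mu(v_0)$ is then a power $p^{s/e}$ with $s>0$, hence lies in $p^{1/e}\mathcal{O}_K$, and after summing finitely many such terms we land in $p^{\nu}\mathcal{O}_K$ for a suitable negative exponent $\nu$ determined by the ramification.

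The cleaner route, which I would actually carry out, is induction on the depth of the dendrogram $D(C)$. The base case $\#C=1$ is trivial since $\epsilon(\alpha)=0$. For the inductive step, apply \eqref{energy-branch}: the term $\#(C\setminus C_\alpha)\cdot\mu(v_0)$ divided by $\mu(v_0)$ is exactly an integer, and $E_p(C_\alpha,\mathscr{C}_\alpha,\alpha)/\mu(v_{B_\alpha})$ is, by the inductive hypothesis applied to the strictly smaller cluster $C_\alpha$ (whose dendrogram has strictly smaller depth), of the form $N'+O(p^{\nu'})$ with $N'\in\mathds{N}$ and $\nu'<0$. Then
\[
\frac{E_p(C_\alpha,\mathscr{C}_\alpha,\alpha)}{\mu(v_0)}=\frac{\mu(v_{B_\alpha})}{\mu(v_0)}\bigl(N'+O(p^{\nu'})\bigr),
\]
and since $\mu(v_{B_\alpha})/\mu(v_0)=p^{\delta/e}$ with $\delta\ge 1$ a positive integer (the edge from $v_0$ to $v_{B_\alpha}$ has positive length, and all lengths are in $\tfrac1e\mathds{Z}$), this whole quantity lies in $p^{\delta/e}\mathcal{O}_K\subseteq p^{1/e}\mathcal{O}_K$, i.e.\ it is $O(p^{\nu})$ for any $\nu\le\delta/e-$ (something bounded below by the finitely many valuations occurring). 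So the integer part $N_\alpha$ is just $\#(C\setminus C_\alpha)$ and $\nu_\alpha$ can be taken as $\delta/e+\nu'$ adjusted downward, which is still negative — or one simply notes that any $O(p^{\nu'})$ term scaled by a positive power of $p$ remains $O(p^{\nu'})$.

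The main obstacle, and the only place requiring care, is pinning down what ``$O(p^{\nu_\alpha})$'' is meant to mean in a possibly ramified extension $K/\mathds{Q}_p$: the natural exponents are multiples of $1/e$, not integers, so ``$O(p^{\nu_\alpha})$ with $\nu_\alpha<0$'' should be read as lying in the fractional ideal $\mathfrak{m}_K^{\,e\nu_\alpha}$ (equivalently, having $K$-norm $\le p^{\nu_\alpha}$), and one must check the statement still makes sense — indeed $\nu_\alpha<0$ just encodes that the error term can be as large as $\mu(v_{B_\alpha})/\mu(v_0)$ times a bounded quantity, which need not be $\le 1$ in absolute value only because of the normalisation, but is controlled. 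I would therefore state explicitly at the outset that $\nu_\alpha$ is the (negative) exponent with $p^{\nu_\alpha}=\max_{a\in C_\alpha}\absolute{a-\alpha}_K/\mu(v_0)$, so that $\nu_\alpha = (\text{valuation of }\mu(v_{B_\alpha}))-(\text{valuation of }\mu(v_0))<0$ by strict monotonicity of $\mu$ along edges, and then the computation above finishes immediately. Everything else is bookkeeping with finitely many powers of $p^{1/e}$.
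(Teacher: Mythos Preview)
Your core idea is the same as the paper's: set $N_\alpha=\#(C\setminus C_\alpha)$ and bound the leftover term $E_p(C_\alpha,\mathscr{C}_\alpha,\alpha)/\mu(v_0)$ using $\mu(v_{B_\alpha})<\mu(v_0)$. The paper does exactly this in two lines, simply observing $E_p(C_\alpha,\mathscr{C}_\alpha,\alpha)\le \#C_\alpha\cdot\mu(v_\alpha)$ and $\mu(v_\alpha)<\mu(v_0)$; your induction on dendrogram depth is correct but unnecessary, since the crude bound already gives the result without any recursion.

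Two points of care. First, there is a sign slip midway: you write $\mu(v_{B_\alpha})/\mu(v_0)=p^{\delta/e}$ with $\delta\ge 1$, but this ratio is strictly less than $1$, so the exponent must be negative (you silently fix this in your final paragraph when you correctly conclude $\nu_\alpha<0$). Second, phrases like ``lies in $p^{1/e}\mathcal{O}_K$'' are a type error: the quantities $\epsilon(\alpha)$, $\mu(v_0)$, and their ratios are real numbers (sums of norms), not elements of $K$, so speaking of them lying in fractional ideals of $\mathcal{O}_K$ does not make sense. What you actually need, and what the paper uses, is just the real inequality $E_p(C_\alpha,\mathscr{C}_\alpha,\alpha)/\mu(v_0)\le \#C_\alpha\cdot p^{\nu_\alpha}$ for the negative real number $\nu_\alpha$ determined by $p^{\nu_\alpha}=\mu(v_\alpha)/\mu(v_0)$. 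Your concern about ramification is legitimate in that $\nu_\alpha$ need not be an integer, but neither the statement nor the paper requires it to be; $\nu_\alpha<0$ is all that is claimed.
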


Equation (\ref{O4epsilon}) means that $\frac{\epsilon(\alpha)}{\mu(v_0)}$
is a natural number plus some small term given as a multiple of $p^{\nu_\alpha}$. 

\begin{proof}
Set $N_\alpha=\#(C\setminus C_\alpha)$, and notice that
\begin{align}
E_p(C_\alpha,\mathscr{C}_\alpha,\alpha)\le\#C_\alpha\cdot\mu(v_\alpha), \label{energybound}
\end{align}
where $v_\alpha$ is the root of $B_\alpha$. The claim now follows from the
obvious inequality
$\mu(v_\alpha)<\mu(v_0)$.
\end{proof}

Now, we can formulate our algorithm:

\begin{alg}[Cluster centers] \rm \label{centerclustp}
{\em Step $1$}. Find all branches $B^{(1)}\in \mathcal{B}(C)$ with largest
value of $\#C_{B^{(1)}}$. Extract 
those  clusters $C_{B^{(1)}}$ for which $\mu(v_{B^{(1)}})$ is minimal, and
the number 
$$
c_1=\max\mathset{\#C_{B^{(1)}}\mid B^{(1)}\in \mathcal{B}(C)}.
$$

\medskip\noindent
{\em Step $N$}. Assume that in the previous step, a list of 
clusters
$C_{B^{(N-1)}}$, and a  number $c_{N-1}$ is produced.
Find all branches $B^{(N)}$ of the rooted trees $D(C_{B^{(N-1)}})$
with largest possible value $c_N$
of $\#C_{B^{(N)}}$.
Extract those clusters $C_{B^{(N)}}$ minimising $\mu(v_{B^{(N)}})$, together with $c_N$.

\medskip\noindent
At some point, there will be a {\em Step $N'$}  in which  the trees $D(C_{B^{(N)}})$ have only one
vertex  each. The  procedure terminates thus:

\medskip\noindent
{\em Output}. A list $(C_i)_{i\in I}$ of those clusters from Step $N'$ with
minimal value of $\mu(v_i)$, where $v_i$ is the vertex of $D(C_i)$. 
\end{alg}

\begin{thm}
Let $C'=C_{N'}\subseteq C$ be a cluster produced by performing  Algorithm
\ref{centerclustp}.
Then  any $\alpha\in C'$ is a center of $C$ with respect to $E_p$.
\end{thm}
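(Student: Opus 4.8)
The plan is to prove the assertion by induction on the number of vertices of the dendrogram $D(C)$, peeling off one level of the algorithm's recursion at a time by means of Lemma~\ref{lem-energy-branch}. Write $\mu_0=\mu(v_0)$ for the diameter of $C$. For $\alpha\in C$, if $B_\alpha\in\mathcal{B}(C)$ is the branch containing $\alpha$ and $C_\alpha=C_{B_\alpha}$, then Lemma~\ref{lem-energy-branch} gives
$$
\epsilon(\alpha)=\#(C\setminus C_\alpha)\cdot\mu_0+E_p(C_\alpha,\mathscr{C}_\alpha,\alpha),
$$
and by (\ref{energybound}) together with $\mu(v_\alpha)<\mu_0$ the residual term $E_p(C_\alpha,\mathscr{C}_\alpha,\alpha)$ is a lower-order correction in the sense of (\ref{O4epsilon}). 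The base case is a single-vertex dendrogram: there every branch is an end, $\epsilon(\alpha)=(\#C-1)\mu_0$ is independent of $\alpha$, the algorithm outputs all of $C$, and the claim is immediate.

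For the inductive step I would argue as follows. Because the residual is, by (\ref{O4epsilon}), a correction smaller than one top-level unit $\mu_0$, any minimiser of $\epsilon$ must first make $\#(C\setminus C_\alpha)$ as small as possible, i.e.\ must lie in a branch of maximal cardinality $c_1=\max_{B\in\mathcal{B}(C)}\#C_B$; this is exactly the family retained in Step~$1$ of Algorithm~\ref{centerclustp}. On that family the leading term of $\epsilon$ is the constant $(\#C-c_1)\mu_0$, so minimising $\epsilon$ reduces to minimising $E_p(C_{B^{(1)}},\{C_{B^{(1)}}\},\alpha)$ over the retained branches $B^{(1)}$ and over $\alpha\in C_{B^{(1)}}$. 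Applying the same decomposition one level down, the top-level unit inside $C_{B^{(1)}}$ is $\mu(v_{B^{(1)}})$, so one first keeps the maximal branches of smallest such diameter — the second selection in Step~$1$ — and is then left with the identical minimisation problem for the strictly smaller cluster $C_{B^{(1)}}$. Since running Algorithm~\ref{centerclustp} on $C$ from Step~$2$ onward is literally running it on each retained $C_{B^{(1)}}$ from its own Step~$1$, the induction hypothesis says that the output clusters $C'=C_{N'}$ consist of centres of $C_{B^{(1)}}$, while the final ``minimal $\mu(v_i)$'' selection compares across the surviving branches. Composing, any $\alpha\in C'$ minimises $\epsilon$, as claimed.

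The hard part is the domination statement on which the whole induction rests: that the residual $E_p(C_\alpha,\mathscr{C}_\alpha,\alpha)$ can never reach a full unit $\mu_0$, so that a maximal-cardinality branch cannot be beaten by a smaller but internally much tighter one, and, similarly, that the tie-break by minimal $\mu(v_{B^{(1)}})$ truly minimises the residual rather than merely its crude bound $(\#C_{B^{(1)}}-1)\mu(v_{B^{(1)}})$. This is precisely the estimate $\nu_\alpha<0$ of (\ref{O4epsilon}), and I would scrutinise it: the bound $E_p(C_\alpha,\mathscr{C}_\alpha,\alpha)\le\#C_\alpha\cdot\mu(v_\alpha)$ with $\mu(v_\alpha)<\mu_0$ only forces $\#C_\alpha<\mu_0/\mu(v_\alpha)$, which one should verify holds for the data in question, or else reinforce with a hypothesis bounding cluster sizes against the gaps between sub-clusters. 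If that estimate stands the induction goes through verbatim; if it must be sharpened, the induction above is the natural scaffold into which to feed the stronger input.
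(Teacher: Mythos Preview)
Your approach is the paper's approach: both arguments unfold $\epsilon(\alpha)$ level by level via Lemma~\ref{lem-energy-branch}, obtaining at each stage a leading term $\#(C\setminus C_\alpha)\cdot\mu(v_0)$ plus a residual, and both then assert that the greedy choice (largest branch first, then smallest diameter among those) is forced because the residual is a ``lower-order'' correction in the sense of (\ref{O4epsilon}). The paper's proof simply says ``the minimality of $\epsilon(\alpha)$ is guaranteed by (\ref{O4epsilon}), applied to each step'' and stops there; your inductive packaging is the same argument with the recursion made explicit.

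The doubt you raise in your last paragraph is not a minor scruple but a genuine gap, and it is present in the paper's proof as well. The estimate (\ref{O4epsilon}) only says the residual is a bounded multiple of some $p^{\nu_\alpha}$ with $\nu_\alpha<0$; it does \emph{not} say the residual is smaller than one unit $\mu_0$, which is what the greedy step needs. Concretely, in $\mathbb{Q}_2$ take
\[
C=\{0,\,2,\,4,\,6,\,1,\,1+2^{6},\,1+2^{7}\}.
\]
The two branches at the root are $B_1=\{0,2,4,6\}$ of size $4$ and diameter $1/2$, and $B_2=\{1,1+2^6,1+2^7\}$ of size $3$ and diameter $1/64$. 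Step~1 of Algorithm~\ref{centerclustp} selects $B_1$ alone and the algorithm eventually outputs every element of $B_1$. But a direct computation gives $\epsilon(\alpha)=3+5/4=17/4$ for each $\alpha\in B_1$, while $\epsilon(1)=4+3/128<17/4$. Thus the true centres lie in $B_2$, the algorithm's output is not a centre, and the theorem fails as stated. Your suggested remedy --- an explicit hypothesis bounding branch cardinalities against the gap $\mu_0/\mu(v_\alpha)$ --- is exactly what is missing; without it neither your induction nor the paper's argument can be completed.
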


\begin{proof}
Let $C=C_0\supseteq C_1\supseteq \dots C_{N'}=C'$ be a strictly decreasing chain
of clusters produced by the $N'$ steps of Algorithm \ref{centerclustp}.
Let the corresponding cardinalities be $c_0,\dots,c_{N'}$.
By applying Lemma \ref{lem-energy-branch}, it holds true that
\begin{align}
\epsilon(\alpha)=c_{N'}\cdot\mu(v_{N'})+\sum\limits_{i=1}^{N'}(c_{i-1}-c_i)\cdot\mu(v_{i-1}),
\end{align}
where $v_j$ is the root of the corresponding branch from {\em Step j}. 
The minimality of $\epsilon(\alpha)$ is guaranteed by (\ref{O4epsilon}),
applied to 
each step. Notice, that we have used the obvious fact that for $C'$, the inequality (\ref{energybound}) is an equality.
\end{proof}

\subsection{Quasi-verticial clustering}

The two previous subsections already lead to a $p$-adic  algorithm for
verticial clusterings and their centers. In this case, subdividing a cluster
$C_v$ means to make as many subclusters as there are elements in $\children(v)$.
In the case that e.g.\ there are many singletons, this can be a disadvantage.
Hence removing singletons provides more flexibility in that the bigger
subclusters
can either be merged or kept distinct. Even greater flexibility can be achieved
if almost indistinguishable clusters are treated as singletons.

\begin{dfn}
Fix some real $\varepsilon>0$.
A verticial cluster $C_v\subseteq X$ with corresponding vertex $v$ is called a
{\em quasi-singleton}
for $\varepsilon$, if $E(v)<\varepsilon$. 
\end{dfn}

When we speak of a quasi-singleton, we mean a quasi-singleton for some
$\varepsilon$  known from the context.

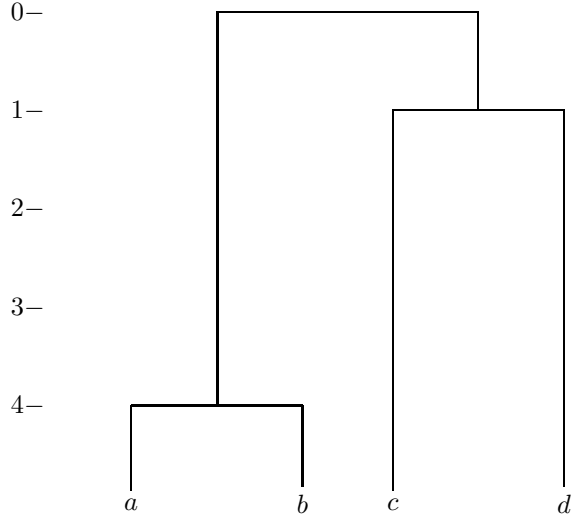
\begin{figure}[h]
$$
\xymatrix{
0-&&*\txt{}\ar@{-}[rrr]\ar@{-}[dddd]&&&*\txt{}\ar@{-}[d]&\\
1-&&&&*\txt{}\ar@{-}[rr]\ar@{-}[dddd]&*\txt{}&*\txt{}\ar@{-}[dddd]\\
2-&&&&&&\\
3-&&&&&&\\
4-&*\txt{}\ar@{-}[rr]\ar@{-}[d]&*\txt{}&*\txt{}\ar@{-}[d]&&&\\
&a&&b&c&&d
}
$$
\caption{Dendrogram with quasi-singleton $\mathset{a,b}$ for $p^{-4}<\varepsilon\le p^{-1}$.} \label{quasising}
\end{figure}

\begin{exa}
The dendrogram in Figure \ref{quasising} contains a quasi-singleton
$\mathset{a,b}$,
if we set $\mu(v)=p^{-\ell}$ for vertex $v$ at level $\ell$ (indicated by the
number at the left), and $p^{-1}<\varepsilon\le p^{-1}$. 
For this choice of $\varepsilon$, the cluster $\mathset{c,d}$ is not
a quasi-singleton. But this is the case for larger $\varepsilon$.
\end{exa}

Clearly, any singleton is a quasi-singleton for any $\varepsilon$.
Since we are working with a fixed $p$-adic field $K$, it is possible to choose
$\varepsilon$ 
so small that the quasi-singletons are precisely the singletons of
our given dataset $X$.

\medskip
The algorithm we propose in the following removes  quasi-singletons
in order to continue with verticial clusterings.
For this, we fix some notation: When referring to a subset $Y$ of our dataset
$X$,
we will indicate this by the subscript $Y$. E.g.\
$\children_Y(v)$ means the set of edges
in $D(Y)$ going out from $v$. Simliarly, with $\mu_Y(V)$, $E_Y(V)$ etc.

\begin{alg}[Quasi-verticial clustering]\rm \label{quasivertclust} 
{\em Input.} Data $X_0:=X\subseteq K$, 
and numbers $k_0:=k\ge 1$, $\varepsilon>0$.

\smallskip\noindent
{\em Step 1.} Remove from $D(X)$ all $v\in\children_{X_0}(v_0)$ corresponding to
 quasi-singletons for $\varepsilon$. Let $s_1$ be the number of vertices removed.
 Extract corresponding reduced dataset $X_1\subseteq X_0$,
as well as
$\children_{X_1}(v_0)$,  $E_{X_1}(v_0)=\mu_{X_1}(v_0)$, and $k_1:=k-s_1$. 

\medskip\noindent
{\em Step $N$.} Assume that in the previous step, we are given 
 a quadruple of  families 
$$
(\mathscr{V}_{N-1},\mathscr{X}_{N-1}, E_{N-1},\mathscr{K}_{N-1})
$$ 
of 
 sets $V\in\mathscr{V}_{N-1}$ of vertices in $D(X)$, 
datasets
 $X(V)\in \mathscr{X}_{N-1}$,
 an energy value
 $E_{N-1}=E_{X(V)}(V)$, and numbers
 $k_{N-1}(V)\le k$
  (where $V\in\mathscr{V}_{N-1}$).
Remove for all $V\in\mathscr{V}_{N-1}$ from $D(X(V))$ all vertices in
$\children_{X(V)}(v)$ corresponding to
$s_N(v)$ quasi-singletons,
where $v\in V$. Find all $V\in\mathscr{V}_{N-1}$ and $v\in V$ such that
\begin{enumerate}
\item $k_{N-1}(V)-s_N(v)\ge 0$, and
\item $E_{X(V)}(W_v)<E_{N-1}$ is smallest possible, 
\end{enumerate}
where $W_v:=\children(v)\cup V\setminus\mathset{v}$. 
Extract corresponding quadruple of families 
$$
(\mathscr{V}_N,\mathscr{X}_N,E_N,\mathscr{K}_N)
$$
 of 
new vertex sets $W_v$,
reduced datasets $X(W_v)\subseteq X(V)$,
 energy value $E_N=E(W_v)$, and $k_N(W_v):=k_{N-1}(V)-s_N(v)$.

\medskip\noindent
{\em Output.}
A list of clusterings consisting of quasi-singletons for $\varepsilon$ and  clusters
produced above by collecting the remnants in each step.
\end{alg}

\begin{rem}
The output clusterings of Algorithm \ref{quasivertclust} all have energy of
the form
$$
E+O(p^\alpha),
$$
where $E$ is independent of the clustering, and $\alpha<0$ is small.
\end{rem}

We can now put things together in order to find clusterings in different ways:

\begin{alg}[(Quasi-)Verticial split-LBG$_p$]\rm \label{splitLBGp}
{\em Input.} As in Algorithm \ref{minclustp} (resp.\ Algorithm \ref{quasivertclust}).

\medskip\noindent
{\em Step 1.} Perform Algorithm \ref{minclustp} (resp.\ Algorithm \ref{quasivertclust}).

\medskip\noindent
{\em Step 2.} Perform Algorithm \ref{centerclustp} for each cluster occurring
in each clustering given out in the previous step.

\medskip\noindent
{\em Output.} A list $(\mathscr{C}_i,({\bf a}_j^{(i)})_{j\in J})_{i\in I}$ of
$E$-suboptimal clusterings  with corresponding list of 
$E$-center vectors $({\bf a}_j^{(i)})_{j\in J}$ for clustering $\mathscr{C}_i$.  
\end{alg}

Both subroutines, Algorithms  \ref{minclustp} and \ref{centerclustp}, boil
down to counts and evaluations of $\mu(v)$ for vertices $v$. 
Therefore, we remark:


\section{Dependence on the choice of the prime $p$} \label{anyprime}

A natural issue is, how the outputs of the algorithms introduced in the
previous sections depend
on the choice of the prime number $p$. We will prove a finiteness result.

\bigskip
Recall that the energy of a verticial cluster $C_V$ is of the form
\begin{align}
E(C_V)= A\cdot p^{-\nu} \label{energyterm}
\end{align}
with  natural numbers $A$ and $\nu$, and is additive on disjoint unions of clusters.
Splitting a cluster is performed by replacing vertex $v$ by the vertex set
$\children(v)$, and the change in energy is given by
$$
E_{\rm new}=E_{\rm old}-E(C_v)+E(\children(v)),
$$
i.e.\ the difference is 
$$
\delta_vE_p:=E(C_v)-E(\children(v)).
$$
Our approch towards minimising $E_p$ is to refine the given clustering
in the direction of largest $\delta_vE_p$. Now, the quantity $\delta_vE_p$
depends on the prime number $p$ as shown by (\ref{energyterm}).
This means that different $p$ can result in different
rankings  of the   vertices by the order in 
which they are split. 
We call this the {\em $p$-ranking} of the vertices of $D(X)$.

\begin{exa}
Assume we want to find verticial clusterings of  data 
$$
X=\mathset{x_1,\dots,x_{13}}
$$ 
having
underlying dendrogram as in Figure \ref{dendro}.
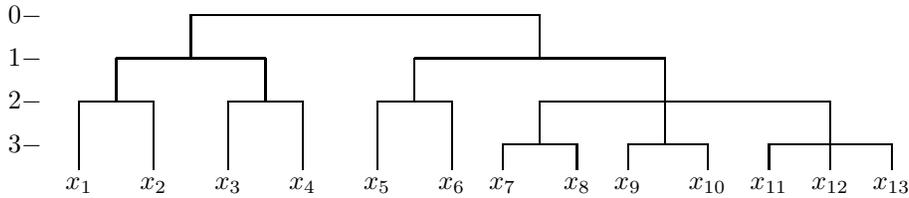
\begin{figure}[h]
$$
\xymatrix@=3pt{
0-&&&&*\txt{}\ar@{-}[rrrrrrrrr]\ar@{-}[d]&&&&&&&&&*\txt{}\ar@{-}[d]&&&&&&
\\
1-&&*\txt{}\ar@{-}[rrrr]\ar@{-}[d]&&*\txt{}&&*\txt{}\ar@{-}[d]&&&&*\txt{}\ar@{-}[rrrrrr]\ar@{-}[d]&&&*\txt{}&&&*\txt{}\ar@{-}[d]&
\\
2-&*\txt{}\ar@{-}[rr]\ar@{-}[dd]&*\txt{}&*\txt{}\ar@{-}[dd]&&*\txt{}\ar@{-}[rr]\ar@{-}[dd]&*\txt{}&*\txt{}\ar@{-}[dd]&&*\txt{}\ar@{-}[rr]\ar@{-}[dd]&*\txt{}&*\txt{}\ar@{-}[dd]&&
*\txt{}\ar@{-}[rrrrrr]\ar@{-}[d]&&&*\txt{}\ar@{-}[d]&&&*\txt{}\ar@{-}[d]&&
\\
3-&&&&&&&&&&&&*\txt{}\ar@{-}[rr]\ar@{-}[d]&*\txt{}&*\txt{}\ar@{-}[d]&*\txt{}\ar@{-}[rr]\ar@{-}[d]&*\txt{}&*\txt{}\ar@{-}[d]&*\txt{}\ar@{-}[rr]\ar@{-}[d]&*\txt{}\ar@{-}[d]&*\txt{}\ar@{-}[d]
\\
&x_1&&x_2&&x_3&&x_4&&x_5&&x_6&x_7&&x_8&x_9&&x_{10}&x_{11}&x_{12}&x_{13}
}
$$
\caption{A dendrogram.} \label{dendro}
\end{figure}
Consider the  vertices $a,b,c,d$ in the underlying rooted vertex tree as depicted in
Figure \ref{tree4dendro}.
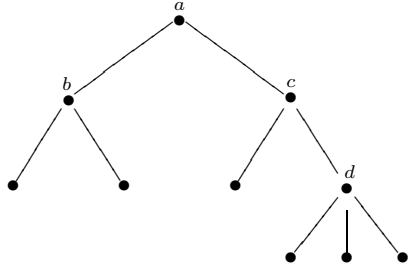
\begin{figure}[h]
$$
\xymatrix@=15pt{
&&&*\txt{\raisebox{3pt}{$\stackrel{a}{\bullet}$}}\ar@{-}[dll]\ar@{-}[drr]&&&&\\
&*\txt{\raisebox{6pt}{$\stackrel{\!b}{\bullet}$}}\ar@{-}[dl]\ar@{-}[dr]&&&&*\txt{\raisebox{6pt}{$\stackrel{c}{\bullet}$}}\ar@{-}[dl]\ar@{-}[dr]&&\\
*\txt{$\bullet$}&&*\txt{$\bullet$}&&*\txt{$\bullet$}&&*\txt{\raisebox{6pt}{$\stackrel{\;d}{\bullet}$}}\ar@{-}[dl]\ar@{-}[d]\ar@{-}[dr]&\\
&&&&&*\txt{$\bullet$}&*\txt{$\bullet$}&*\txt{$\bullet$}
}
$$
\caption{Vertex tree underlying Figure \ref{dendro}.}\label{tree4dendro}
\end{figure} 
Then Table \ref{vertexrank} shows the different $p$-rankings of these vertices for
$p=2$, $3$ and $5$.
\begin{table}[h]
\rm
\begin{tabular}{c|c|c}
Rank&Vertex&$\delta_vE_2$\\\hline
1.&a&\rule[-5pt]{0pt}{15pt}$\frac{11}{2}$\\\hline
2.&c&\rule[-5pt]{0pt}{15pt}$\frac{9}{4}$\\\hline
3.&b&$2$\\
&d&$2$\\\hline
\multicolumn{3}{c}{\rule[-5pt]{0pt}{20pt}$p=2$}
\end{tabular}
\hfill
\begin{tabular}{c|c|c}
Rank&Vertex&$\delta_vE_3$\\\hline
1.&a&\rule[-5pt]{0pt}{15pt}$\frac{22}{3}$\\\hline
2.&c&\rule[-5pt]{0pt}{15pt}$\frac{17}{9}$\\\hline
3.&b&\rule[-5pt]{0pt}{15pt}$\frac{7}{9}$\\\hline
4.&d&\rule[-5pt]{0pt}{15pt}$\frac{16}{27}$\\\hline
\multicolumn{3}{c}{\rule[-5pt]{0pt}{20pt}$p=3$}
\end{tabular}
\hfill
\begin{tabular}{c|c|c}
Rank&Vertex&$\delta_vE_5$\\\hline
1.&a&\rule[-5pt]{0pt}{15pt}$\frac{44}{5}$\\\hline
2.&c&\rule[-5pt]{0pt}{15pt}$\frac{44}{25}$\\\hline
3.&b&\rule[-5pt]{0pt}{15pt}$\frac{13}{25}$\\\hline
4.&d&\rule[-5pt]{0pt}{15pt}$\frac{26}{225}$\\\hline
\multicolumn{3}{c}{\rule[-5pt]{0pt}{20pt}$p=5$}
\end{tabular}
\caption{Vertex rankings for Figure \ref{dendro}.}\label{vertexrank}
\end{table}
\end{exa}

\begin{thm}
For all but finitely many primes, the $p$-rankings of the vertices of a given
dendrogram $D(X)$ belonging to  data $X$ taken from a fixed $p$-adic field
are the same.
\end{thm}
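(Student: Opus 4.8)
The plan is to reduce the assertion to a statement about finitely many polynomial inequalities in the variable $p$. First I would record the precise shape of the data entering a $p$-ranking. By \eqref{energyterm}, every vertex $v$ of $D(X)$ has $E(C_v) = A_v\, p^{-\nu_v}$ with $A_v,\nu_v\in\mathds{N}$ depending only on the combinatorics and the integer levels of $D(X)$, not on $p$; hence $E(\children(v)) = \sum_{w\in\children(v)} A_w\, p^{-\nu_w}$ is likewise a ``Laurent polynomial in $p^{-1}$ with non-negative integer coefficients'' whose exponents and coefficients are fixed by $D(X)$. Consequently each splitting increment $\delta_v E_p = E(C_v) - E(\children(v))$ is a fixed element of $\mathds{Z}[p^{-1}]$, i.e.\ there is a rational function $R_v(t) = P_v(t)\in\mathds{Z}[t]$ (after clearing the common denominator $t^{-\nu}$ for a uniform $\nu$) such that $\delta_v E_p = p^{\nu}\cdot$ nothing --- more cleanly, $\delta_v E_p - \delta_w E_p = p^{-M} Q_{v,w}(p)$ for a fixed integer $M$ and a fixed polynomial $Q_{v,w}\in\mathds{Z}[t]$ with coefficients independent of $p$, where $t$ plays the role of $p$.

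The $p$-ranking is, by definition, the ordering of the vertices induced by the values $\delta_v E_p$ (together with whatever secondary tie-break rule is used, e.g.\ $\mu(v)$, which is itself of the same form $p^{-\ell}$). So the ranking changes with $p$ only when some comparison $\delta_v E_p \lessgtr \delta_w E_p$ flips sign, equivalently when the fixed polynomial $Q_{v,w}$ changes sign between two consecutive primes, equivalently when $p$ falls on one side or the other of a real root of $Q_{v,w}$. The key step is then: a nonzero polynomial $Q_{v,w}\in\mathds{Z}[t]$ has only finitely many real roots, hence $\mathrm{sign}\,Q_{v,w}(p)$ is eventually constant in $p$; and if $Q_{v,w}\equiv 0$ then the comparison never flips. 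Since $D(X)$ is a finite tree there are only finitely many pairs $(v,w)$, so taking the maximum over all pairs of the largest real root of the corresponding $Q_{v,w}$ gives a bound $p_0$ beyond which all pairwise comparisons --- and therefore the entire $p$-ranking --- are independent of $p$. One must also check that the tie-breaking does not reintroduce $p$-dependence: ties $\delta_v E_p = \delta_w E_p$ for large $p$ can only occur when $Q_{v,w}\equiv 0$, i.e.\ for all $p$ simultaneously, so the secondary rule (again comparing quantities of the form $p^{-\ell}$) is governed by the same finite-root argument applied to the level exponents $\ell$, which are plain integers and hence compared $p$-independently.

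I expect the main obstacle to be purely bookkeeping rather than conceptual: one must argue that the algorithms of Section \ref{sec-pclust} and \ref{sec-learn} make their choices \emph{solely} through the $p$-ranking (and through the class-number bounds $k$, which are $p$-independent integers), so that ``same $p$-ranking'' really does force ``same output.'' This is where the recursive structure of Algorithm \ref{minclustp} and Algorithm \ref{quasivertclust} matters: at each step the set of admissible splits depends only on the tree combinatorics, and the chosen split depends only on comparisons of $\delta_v E_p$-type quantities among a finite, $p$-independent collection of candidate vertex sets. Since the whole execution tree of the algorithm is finite, only finitely many such comparisons ever occur, each excludes finitely many primes, and the union is finite. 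I would also remark that the quasi-singleton threshold $\varepsilon$ introduces comparisons $E(v) = A_v p^{-\nu_v} \lessgtr \varepsilon$; for fixed $\varepsilon$ these again flip at most once as $p$ grows (since $A_v p^{-\nu_v}\to 0$), contributing finitely many more exceptional primes, so the finiteness conclusion is unaffected.
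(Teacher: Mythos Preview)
Your argument is correct and rests on the same key observation as the paper: each $\delta_vE_p$ is the value at $t=1/p$ of a fixed polynomial $P_v(t)$ with non-negative integer coefficients determined by the combinatorics and integer levels of $D(X)$. Where you diverge is in the finishing step. The paper normalises so that $P_v(0)>0$, observes $P_v(1/p)\to P_v(0)$ as $p\to\infty$, identifies this limit as $E(v)$, and then asserts that for large $p$ the ranking is governed by the quantities $E(v)/p^{\ell(v)}$, which eventually stabilise. You instead compare vertices pairwise: writing $\delta_vE_p-\delta_wE_p=p^{-M}Q_{v,w}(p)$ with $Q_{v,w}\in\mathds{Z}[t]$ fixed, you use that a nonzero polynomial has finitely many real roots, so each pairwise comparison stabilises beyond its largest root, and there are only finitely many pairs.

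Your route is more elementary and in fact tighter: the paper's last step (that the ranking of $E(v)/p^{\ell(v)}$ stabilises) is itself a claim of the same type and is left implicit, whereas your finite-root argument handles all comparisons uniformly and also cleanly disposes of the tie case ($Q_{v,w}\equiv 0$ means the tie holds for every $p$). Two minor remarks: the aborted clause ``$\delta_vE_p=p^{\nu}\cdot$ nothing'' should be deleted in a clean write-up, as your ``more cleanly'' formulation already says exactly what is needed; and the material on tie-breaking rules, algorithm execution trees, and the quasi-singleton threshold~$\varepsilon$ goes beyond the theorem as stated (which concerns only the $p$-ranking itself) and could be trimmed or moved to a separate remark.
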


\begin{proof}
The energy gradient for a vertex $v$ can be written as
$$
\delta_vE_p=P_v(t)|_{t=\frac{1}{p}}
$$
for some  polynomial $P_v(t)$ whose coefficients are natural numbers.
By dividing off powers of $t$, we may assume that $P_v(t)$ has a non-zero
constant term, hence that $P_v(0)>0$. By the considerations from the previous
sections,
we know that 
\begin{align}
0<P_v\left(\frac{1}{p}\right)<P_v(0) \label{decrease}
\end{align}
for all primes $p$. By viewing $P_v(t)$ as a continuous function on the
intervall $[0,1/2]$, we see from the right inequality in (\ref{decrease}) that
$P_v(t)$ must be decreasing on some interval $[0,x]$ with positive
$x\le\frac{1}{2}$ sufficiently small.
It follows that the sequence of values
$P_v\left(\frac{1}{p}\right)$ for prime $p\to\infty$ converges to
$P_v(0)$. Since that limit equals $E(v)$
on the maximal subtree of $D(X)$ having $v$ as its root, we have proven
$$
\lim\limits_{p\to\infty}P_v\left(\frac{1}{p}\right)=E(v).
$$
In other words, for sufficiently large prime $p$, the vertex gradient can be
approximated
by the vertex energy. Hence the ranking of the vertices is approximatively the
ranking
of the numbers
\begin{align}
\frac{E(v)}{p^{\ell(v)}}, \label{energyrank} 
\end{align}
where $\ell(v)$ depends on the level of $v$ in the dendrogram.
The latter ranking does not change once $p$ is sufficently large.
Hence, for large $p$ the vertex ranking does not change.
\end{proof}

\begin{rem}
Notice from (\ref{energyrank}) that using a large prime number tends to force
splitting vertices 
 higher up in the hierarchy underlying the dendrogram. On the other
hand, taking a small prime number allows to split also clusters containg lots of
data
at low levels in the hierarchy.
\end{rem}

\begin{thm}
Let $C\subseteq K$ be a cluster. If $a$ is a center of $C$ with respect to
$E_p$
for some prime $p$, then it is a center for all primes. 
\end{thm}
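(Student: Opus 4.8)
The plan is to show that the defining property of a center does not actually involve $p$: being a center is determined purely by the combinatorics of the dendrogram $D(C)$ together with the integer multiplicities, and Algorithm~\ref{centerclustp} selects exactly those ends. First I would recall from Lemma~\ref{lem-energy-branch} and its corollary that for any $\alpha \in C$, writing $C = C_0 \supseteq C_1 \supseteq \dots \supseteq C_{N'}$ for the chain of branch-clusters containing $\alpha$ and $c_i = \#C_i$, one has
\begin{align*}
\epsilon(\alpha) = c_{N'}\cdot\mu(v_{N'}) + \sum_{i=1}^{N'}(c_{i-1}-c_i)\cdot\mu(v_{i-1}),
\end{align*}
where each $v_j$ is a vertex of $D(C)$ and each $\mu(v_j) = p^{-\nu_j}$ with $0 = \nu_{-1} < \nu_0 < \nu_1 < \dots$ a strictly increasing sequence of rationals determined by the tree (here I normalise $\mu(v_0)$, or more precisely work with $\epsilon(\alpha)/\mu(v_0)$ as in equation~\eqref{O4epsilon}). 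The key structural point, already exploited in the proof of~\eqref{O4epsilon}, is that the leading term $c_{N'}\cdot\mu(v_{N'})$ is dominated in $p$-adic size — no, in archimedean size — by the contribution of the highest split, and more generally that the expansion $\epsilon(\alpha)/\mu(v_0) = N_\alpha + O(p^{\nu_\alpha})$ has its lexicographically-first coefficients given by the $c_i$ independently of $p$.

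The heart of the argument is then: $\alpha$ minimises $\epsilon$ if and only if the sequence $(c_0, c_1, \dots, c_{N'}; \nu_0, \nu_1, \dots)$ attached to $\alpha$ is lexicographically optimal in a sense that does not reference $p$. Concretely, comparing two candidates $\alpha, \alpha'$: since $\mu(v_0)$ is common, $\epsilon(\alpha) < \epsilon(\alpha')$ is governed first by which of them stays longest inside a large low-diameter branch — i.e.\ by maximising $\#C_{B^{(1)}}$ and then minimising $\mu(v_{B^{(1)}})$, which is precisely Step~1 of Algorithm~\ref{centerclustp} — and then recursively by the same comparison inside that branch, which is Step~$N$. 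I would make this precise by an induction on the number of levels of $D(C)$: the outermost term $(\#C - c_1)\cdot\mu(v_0)$ is minimised by choosing $c_1$ maximal; among those, since all remaining terms are bounded by $\#C_1 \cdot \mu(v_1) < c_1 \cdot \mu(v_0)$ times something, one next minimises $\mu(v_1)$; and so on. At each stage the choice being made is an integer-maximisation or a comparison of diameters $\mu(v) = p^{-\nu(v)}$ among vertices of the same tree, and since $t \mapsto t^{\nu}$ is strictly monotone on $(0,1)$ for every fixed exponent ordering, the comparison $\mu(v) < \mu(v')$ is equivalent to $\nu(v) > \nu(v')$, which is a statement about the tree alone. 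Hence the set $C'$ output by the algorithm, and therefore the set of centers, is the same for every $p$.

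The main obstacle I anticipate is justifying rigorously that the lexicographic/recursive comparison I sketch really does capture $\epsilon(\alpha) < \epsilon(\alpha')$ exactly, rather than merely up to a $p$-dependent error — in other words, ruling out ``ties at one level broken differently by different primes.'' The danger is a situation where $\alpha$ wins at level $i$ by a small margin $(c_{i-1}-c_i)$-vs-$\mu(v_{i-1})$ tradeoff that some prime resolves one way and another prime the other. The resolution should be that the terms in the displayed formula for $\epsilon(\alpha)$ are \emph{strictly separated in archimedean magnitude} across levels: because the $\mu(v_j)$ form a strictly decreasing geometric-type sequence and the multiplicities $c_i$ are bounded by $\#C$, for all but finitely many $p$ the $i$-th term dominates the entire tail $\sum_{j>i}(\cdots)$, so no cancellation across levels occurs; and for the remaining finitely many small primes one checks directly, using that the inequalities~\eqref{decrease} from the previous theorem's proof force the same term-by-term ordering. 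Actually a cleaner route avoids case analysis entirely: one shows that \emph{whenever} $\alpha$ is a center for one prime $p_0$, the chain $(c_i, \nu_i)$ attached to $\alpha$ agrees with the one produced by Algorithm~\ref{centerclustp} — because the Theorem preceding this one already tells us $\alpha$ is an algorithm output — and then invokes that the algorithm's output is manifestly $p$-independent, since every step is a count or a $\mu$-comparison. I would present it this second way, as it reduces the proof to a single sentence once the earlier Theorem is granted: the centers are exactly the ends lying in the clusters $C_i$ output by Algorithm~\ref{centerclustp}, and that output depends only on $D(X)$, not on $p$.
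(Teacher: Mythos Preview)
Your first approach --- writing $\epsilon_p(\alpha)$ as the evaluation at $t=1/p$ of a polynomial $F_\gamma(t)=\sum_i \alpha_i^\gamma\, t^{\ell_i}$ with positive coefficients attached to the path $\gamma\colon v_0\leadsto \alpha$, and then arguing that minimality among the $F_\gamma(1/p)$ is a lexicographic condition on the sequence of pairs $(\ell_i,\alpha_i^\gamma)$ --- is exactly what the paper does. The paper states it tersely: $\epsilon_p(a)$ is a minimum precisely when, in the collection $\{F_\gamma\}$, the lowest-degree term $\alpha_0^\gamma t^{\ell_0}$ has lowest degree and smallest coefficient at that degree, a condition independent of $p$. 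Your concern about ties at one level being broken differently by different primes is a fair one, and the paper does not spell out the recursion any more than you do; your proposed induction on the levels of $D(C)$ is the natural way to make both arguments complete.

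Your preferred ``cleaner route,'' however, has a genuine gap. You write that ``the Theorem preceding this one already tells us $\alpha$ is an algorithm output,'' but that theorem asserts only the implication \emph{algorithm output $\Rightarrow$ center}. From ``$\alpha$ is a center for $p_0$'' you cannot conclude via that theorem that $\alpha$ lies in one of the clusters $C_i$ produced by Algorithm~\ref{centerclustp}; you would need the converse. Establishing that converse --- that any center must at each step sit in a branch of maximal cardinality and, among those, of minimal $\mu$ --- is precisely the lexicographic comparison you were hoping to bypass. So the reduction is circular. Drop the second route and carry out the polynomial/lexicographic argument directly; that is both your first approach and the paper's.
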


\begin{proof}
From Lemma (\ref{lem-energy-branch}) it follows that
$$
\epsilon_p(a)=E_p(C,\mathscr{C},a)=\sum\limits_{v\in V}\alpha_v\mu(v),
$$
where $V$ is the set of vertices on the path $\gamma$ from the top $v_0$ down to $a$. 
As $\mu(v)=p^{-\ell(v)}$, and the $\ell(v)$ form a strictly increasing
sequence $\ell_0,\dots,\ell_M$  of natural numbers as $v$ proceeds along $\gamma$, it follows that
$\epsilon_p(a)$
is given by evaluating the polynomial
$$
F_\gamma(t)=\sum\limits_{i=0}^M\alpha^\gamma_it^{\ell_i}
$$
in $t=\frac{1}{p}$,
where $\alpha^\gamma_i>0$ equals that number $\alpha_v$ with $v$ such that $\ell(v)=\ell_i$.
Now, $\epsilon_p(a)$ being a minimum means that in the collection 
$$
\mathset{F_\gamma(t)\mid\text{$\gamma$ path $v_0\leadsto X$}}
$$ 
the term $a_0^\gamma t^{\ell_0}$ is of
lowest
degree and that coefficient $\alpha^\gamma_0$ is smallest among those terms of
lowest degree. And this does not depend on the choice of prime $p$.
\end{proof}

\section{$p$-adic learning} \label{sec-learn}

In this section we discuss a learning situation in which some $p$-adic data
$X\subseteq K$ together with a clustering $\mathscr{C}_X$
is used as
a ``training set''. The idea is to classify
new data $Y$ taken from some $p$-adic field $L\supseteq K$ on the basis of $X$
and $\mathscr{C}$. 
Without loss of generality we assume that the two $p$-adic
fields $L$ and $K$ coincide.

\subsection{$p$-adic classifiers}
 Learning can be performed by using a classifier which integrates new data
$y\in Y$ into an existing dendrogram $D(X)$ in order to find a suitable
cluster for $y$. We will define such in the $p$-adic situation.

\smallskip
As it may happen that adjoining a point $y\in Y$ to $X$
increases the size of the smallest $p$-adic disk containing the training data $X$,
we use the point at infinity already introduced in \cite{Brad-JoC}. This
allows to classify those data in $Y$ which cannot be classified on the basis
of $(X,\mathscr{C}_X)$
as belonging to the ``cluster at infinity''.
Our method will  use the extended dendrogram
$$
D_\infty(X)=D(\mathds{P}(X)),
$$
where $\mathds{P}(X)=X\cup\mathset{\infty}$\footnote{Note that $D_\infty(X)$
  is
what is denoted by $D(X)$ in \cite{Brad-JoC,Brad-TCJ}.}. The datum $\infty$
will be depicted  at the end of a path going upwards from $v_0$, whereas all other
data will be  at the end of  paths leading downwards.

\begin{exa}
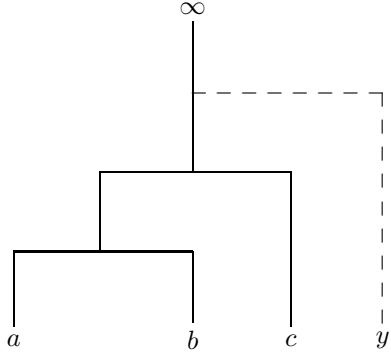
\begin{figure}[h]
$$
\xymatrix{
&&\infty&&\\
&&*\txt{}\ar@{--}[rr]&&*\txt{}\ar@{--}[ddd]\\
&*\txt{}\ar@{-}[rr]\ar@{-}[d]&*\txt{}\ar@{-}[uu]&*\txt{}\ar@{-}[dd]&\\
*\txt{}\ar@{-}[rr]\ar@{-}[d]&*\txt{}&*\txt{}\ar@{-}[d]&&&\\
a&&b&c&y
}
$$
\caption{Dendrogram with cluster at infinity.} \label{learninfty}
\end{figure}
In Figure \ref{learninfty}, some datum $y$ is adjoined to a training dataset
$X=\mathset{a,b,c}$.
As it happens that the distance of $y$ to $X$ is larger than the diameter of
$X$,
the path $v_0\leadsto y$ in the dendrogram $D_\infty(X\cup\mathset{y})$ has a
portion going upwards in direction $\infty$.   
\end{exa}

We call the pair $\mathcal{X}:=(X,\mathscr{C}_X)$ a {\em classification} and have a {\em
  classification map}
$$
\kappa_{\mathcal{X}}\colon \mathds{P}(X)\to\mathscr{C}_X^\infty,\;x\mapsto C_x,
$$
which assigns to each $x\in \mathds{P}(X)$ the cluster $C_x$ containing $x$,
with $$\mathscr{C}_X^\infty=\mathscr{C}_X\cup\mathset{\mathset{\infty}}.$$

Now, let $Z=X\cup Y$. We have the  inclusion map
$\iota\colon\mathds{P}(X)\to \mathds{P}(Z)$ which takes $x\in X$ to itself and $\infty$
to
$\infty$.

\begin{dfn}
A {\em $p$-adic classifier for $Y$ modeled on
  $(X,\mathscr{C}_X)$} is a  map
\begin{align*}
\lambda\colon \mathds{P}(Z)\to \mathscr{C}',
\end{align*}
where $\mathscr{C}'$ is a clustering of $\mathds{P}(Z)$, such that there
exists an injective map $\phi\colon\mathscr{C}_X^\infty\to\mathscr{C}'$ making 
the diagram
$$
\xymatrix{
\mathds{P}(X)\ar[r]^\iota\ar[d]_{\kappa_{\mathcal{X}}}&\mathds{P}(Z)\ar[d]^\lambda\\
\mathscr{C}_X^\infty\ar[r]_\phi&\mathscr{C}'
}
$$ 
commutative. The cluster $C_\infty:=\lambda^{-1}(\phi(\mathset{\infty}))$ is
called
the {\em residue} of $\lambda$. A classifier is called saturated, if $\phi$ is bijective.
\end{dfn}

\begin{rem}
Notice that $\phi$ is unique if it exists. 
\end{rem}

Our first learning algorithm constructs the classifier sequentially
by computing the distance to  cluster centers for $\mathscr{C}_X$.
Let $A=\mathset{a_C\mid C\in\mathscr{C}_X}$ be the set of given cluster
centers
$a_C\in C$. 
Then we have for $y\in Y$ the map
$$
d_y\colon\mathscr{C}_X\to\mathds{R},\; C\mapsto \absolute{y-a_C}_K,
$$
and let $m_y:=\min d_y(\mathscr{C}_X)$. 

The  vertex  $v_y\in D_\infty(A\cup\mathset{y})$ nearest to $y$ can be 
found e.g.\  
using the $p$-adic expansions 
as given by  (\ref{pi-adic-expansion}).
Namely, a vertex corresponds to a disk containing two or more $p$-adic
numbers in $A\cup\mathset{y}$ having common initial terms determined by the radius of the disk.
In geometric terms, traversing along the
 geodesic path $\gamma_y\colon \infty \leadsto y$ until all $a\in A$
have
branched off $\gamma_y$ yields  the vertex $v_y$, and $\mu(v_y)$
is determined 
by the subset $C_{v_y}\subset A$ of those elements branching off precisely in
$v_y$. The length of the path $v_0\leadsto v_y$ gives $m_y$.
And  the map $d_y$ is computed:

\begin{lem}
It holds true that 
$$
\mathscr{C}_y:=d_y^{-1}(m_y)=\mathset{C_a\in\mathscr{C}_X\mid a\in C_{v_y}}.
$$
\end{lem}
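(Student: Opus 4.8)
The plan is to unwind the definitions of the objects involved and show the two sets coincide by double inclusion, using the geometric description of $v_y$ in the tree $D_\infty(A\cup\{y\})$. First I would recall the setup: $m_y=\min_{C\in\mathscr{C}_X}\absolute{y-a_C}_K$, and $v_y$ is the vertex obtained by walking along the geodesic $\gamma_y\colon\infty\leadsto y$ until every $a\in A$ has branched off; by the discussion preceding the lemma, $\mu(v_y)=m_y$, and $C_{v_y}\subseteq A$ is exactly the set of centers that branch off $\gamma_y$ at the vertex $v_y$ itself. The key elementary fact, which I would make explicit, is that for $a\in A$ the distance $\absolute{y-a}_K$ equals $\mu(w)$, where $w$ is the youngest common ancestor of $a$ and $y$ in $D_\infty(A\cup\{y\})$ — this is just the ultrametric/dendrogram dictionary already invoked in Section \ref{somedefs}.

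With that dictionary in hand, the argument is short. For the inclusion $\supseteq$: if $a\in C_{v_y}$, then $a$ branches off $\gamma_y$ precisely at $v_y$, so the youngest common ancestor of $a$ and $y$ is $v_y$, whence $\absolute{y-a}_K=\mu(v_y)=m_y$, i.e.\ $C_a\in d_y^{-1}(m_y)$. For the inclusion $\subseteq$: suppose $C=C_a$ with $\absolute{y-a}_K=m_y$. The common ancestor $w$ of $a$ and $y$ satisfies $\mu(w)=m_y=\mu(v_y)$; since $a$ must branch off $\gamma_y$ somewhere (as $a\ne y$), and the diameters $\mu$ are strictly decreasing along $\gamma_y$ as one descends from $\infty$ toward $y$, the only vertex on $\gamma_y$ with diameter $m_y$ at which a branching of some $a\in A$ can occur is $v_y$ — here I would use that $v_y$ is by construction the \emph{last} vertex at which branchings occur, so any $a$ at distance exactly $m_y=\mu(v_y)$ branches off at $v_y$ and hence $a\in C_{v_y}$. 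One should also note $d_y^{-1}(m_y)$ is nonempty, which is immediate since the minimum $m_y$ is attained.

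The main obstacle I anticipate is purely bookkeeping: making precise the claim that $\mu$ is strictly decreasing along the geodesic $\gamma_y$ and that no two distinct vertices on $\gamma_y$ share the value $m_y$, so that "distance $m_y$" pins down the branch point to be $v_y$ rather than some earlier vertex. This follows from the metric-tree structure of $D_\infty$ (edges have positive length, $\mu$ is the Haar measure of the corresponding disk and hence strictly monotone along any root-to-end path), but it has to be stated carefully; everything else is a direct translation through the dendrogram dictionary. Accordingly the proof can be kept to a few lines once that monotonicity is cited.
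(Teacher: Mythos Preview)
Your proposal is correct and follows essentially the same approach as the paper: the paper's proof is a one-line appeal to the preceding discussion of $v_y$ and the branching of the $a\in A$ off the geodesic $\gamma_y$, and you simply make that appeal explicit by unpacking the dendrogram dictionary into a clean double inclusion. Your extra care about the strict monotonicity of $\mu$ along $\gamma_y$ is sound bookkeeping but not a new idea beyond what the paper already invokes.
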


\begin{proof}
By what has been said above, the minimum is attained precisely for those
clusters $C\in\mathscr{C}_X$ contained in $C_{v_y}$. Hence $C=C_a$ for some $a\in C_{v_y}$. 
\end{proof}

The task is now to decide into which cluster from $\mathscr{C}_y$ to put $y$.

\begin{alg}\rm \label{plearn}
{\em Input.} A classification $\mathcal{X}_0:=\mathcal{X}=(X,\mathscr{C}_X)$,
a set $A=\mathset{a_C \mid C\in\mathscr{C}_X}$ of cluster elements $a_C\in C$, and a set
$Y\subseteq K$ of cardinality $N$. 

\medskip\noindent
{\em Step 0.} Set $C_\infty:=\mathset{\infty}$.

\medskip\noindent
{\em Step 1.} Take $y:=y_1\in Y$, and compute $v_y$, $m_y$, $C_{v_y}$, $\mathscr{C}_y$,
and $\mu_{v_y}$. 

\smallskip\noindent
{\em Case 1.} If $C_{v_y}=\mathset{a}$, then set
$C_y:=C_a\cup\mathset{y}$ and $A_1:=A$. 

\smallskip\noindent
{\em Case 2.} If $\#C_{v_y}>1$, then find the subset $C^y\subseteq C_{v_y}$ of all elements  whose
nearest vertex in $D_\infty(C_{v_y}\cup\mathset{y})$ equals $v_y$.
If $C^y=\emptyset$, then set $C_y=\mathset{y}$ and $A_1:=A\cup\mathset{y}$.
Otherwise, find all elements $a\in C^y$ with minimal energy $E(C_{a}\cup\mathset{y})$.
If there is more than one such $a$, then $C_y:=\mathset{y}$ and
$A_1:=A\cup\mathset{y}$. 
Otherwise, $C_y:=C_a\cup\mathset{y}$, and $A_1:=A$. 

\smallskip
In any case, produce $Y_1:=Y\setminus\mathset{y}$, $A_1$ and 
 classification $\mathcal{X}_1:=(X_1,\mathscr{C}_{X_1})$,
where $X_1=X\cup\mathset{y}$ and
$\mathscr{C}_{X_1}:=\mathset{C_y}\cup\mathscr{C}_X\setminus\mathset{C_a}$.
Terminate, if $Y_1=\emptyset$.

\medskip\noindent
{\em Step $N$.} Assume that in the previous step, sets $Y_{N-1}$, $A_{N-1}$ and a
classification $\mathcal{X}_{N-1}$ have been
given out. Then perform Step $1$ with $\mathcal{X}:=\mathcal{X}_{N-1}$,  $A:=A_{N-1}$,
and $Y:=Y_{N-1}$. 

\medskip\noindent
{\em Output.} On termination in {\em Step $M$}, an optimal classifier 
$$
\lambda\colon \mathds{P}(X_M)\to\mathscr{C}_{X_M},\;x\mapsto C_x,
$$
modeled on $\mathcal{X}_0$.
\end{alg}

\begin{proof}[Proof of optimality]
In each step $N$, $y_N\in Y_N$ is assigned to the  cluster 
$C\in\mathscr{C}_{X_N}$
with minimal energy $E(C\cup\mathset{y_N})$. 
\end{proof}

\begin{thm}\label{plearnthm}
The outcome of Algorithm \ref{plearn} does not depend on the choice of the set
$A$
of cluster representatives.
\end{thm}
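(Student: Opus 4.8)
The plan is to show that the only way the set $A$ of cluster representatives enters Algorithm \ref{plearn} is through quantities that are in fact independent of the particular choice of centers. First I would observe that in each step, the algorithm consults $A$ only at three points: (i) to compute the nearest vertex $v_y$, the value $m_y$, the set $C_{v_y}$, and the diameter $\mu(v_y)$ inside $D_\infty(A\cup\mathset{y})$; (ii) to carry out the $\#C_{v_y}>1$ case test, which refines to the subset $C^y\subseteq C_{v_y}$ whose nearest vertex in $D_\infty(C_{v_y}\cup\mathset{y})$ is again $v_y$; and (iii) to compare the energies $E(C_a\cup\mathset{y})$ over the finitely many candidate clusters $C_a$ with $a\in C^y$.

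For (i), the key point is that $\mu(v_y)$ and $m_y$ depend not on the individual centers $a_C$ but only on the clusters $C\in\mathscr{C}_X$ themselves: by the Lemma preceding Algorithm \ref{plearn}, $d_y(C)=\absolute{y-a_C}_K$ and, since $C$ is verticial (a disk), this value is the same for \emph{every} $a_C\in C$ — it equals the distance from $y$ to the disk $C$. Hence $m_y=\min d_y(\mathscr{C}_X)$ and the set $\mathscr{C}_y=d_y^{-1}(m_y)$ are intrinsic to $(X,\mathscr{C}_X)$ and $y$, and so is the vertex $v_y$ (which the geometric description identifies as the point where the branches toward the clusters in $\mathscr{C}_y$ separate). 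This already settles Case~1, where $C_{v_y}=\mathset{a}$ forces a unique target cluster $C_a$ regardless of which representative $a$ one picked.

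For Case~2, I would argue that the set $C^y$ is likewise determined by which clusters in $\mathscr{C}_y$ are ``at distance exactly $m_y$ in the finest sense'', i.e.\ those $C$ for which the branch of $y$ does not separate from the branch of $C$ before $v_y$ — again a statement about disks, not about chosen points inside them. Finally, for the energy comparison in (iii), invoke Lemma \ref{lem-energy-branch} together with equation (\ref{O4epsilon}): $E(C_a\cup\mathset{y})$ is, up to the common factor and up to a strictly smaller correction term, governed by $\#C_a$ and by $\mu$ of the smallest verticial cluster in $D(X)$ containing $C_a\cup\mathset{y}$, and for distinct clusters $C\in\mathscr{C}_y$ this smallest verticial cluster is exactly the disk $C$ enlarged by $y$'s branch, which does not see the choice of $a_C$. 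Thus the minimiser (or the tie that sends $y$ to its own singleton) is the same for any admissible $A$. By induction on the step number $N$ — using that each step outputs a classification $\mathcal{X}_N$ and an updated set $A_N$ obtained from $A_{N-1}$ by at most adjoining the new point $y_N$, so the same independence hypothesis is available in $\mathcal{X}_N$ — the entire sequence of assignments, hence the output classifier $\lambda$, is independent of $A$.

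\textbf{Main obstacle.} The delicate step is (iii): one must be sure that when two competing clusters $C,C'\in\mathscr{C}_y$ have the same cardinality, the energies $E(C\cup\mathset{y})$ and $E(C'\cup\mathset{y})$ are compared using data ($\#C$, $\#C'$, and the relevant diameters) that genuinely do not involve the representatives, and that ties are broken the same way in every scenario. I expect this to come down to a careful reading of Lemma \ref{lem-energy-branch} applied to $C\cup\mathset{y}$: the branch of $C\cup\mathset{y}$ containing $y$ is a singleton-plus-disk whose structure is fixed by $y$ and $C$ alone, so $E(C\cup\mathset{y})=\#C\cdot\mu(v_{C\cup\{y\}})$ with $v_{C\cup\{y\}}$ the vertex where $y$ joins $C$ — a quantity visibly independent of the choice of $a_C$. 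Once that identity is in hand, the whole argument is a routine induction.
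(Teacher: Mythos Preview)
Your approach is exactly the paper's: reduce to showing that the outcome of a single step is independent of $A$, then induct. The paper's own proof, however, is literally the single sentence ``The outcome of \emph{Step~$1$} does not depend on $A$,'' so you are supplying the details the author omits rather than arguing differently.

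One imprecision worth fixing: your claim in (i) that $\absolute{y-a_C}_K$ is the same for \emph{every} $a_C\in C$ because $C$ is a disk is not true in general---it holds only when $y$ lies outside the $K$-disk $D_C$ corresponding to $C$. If $y\in D_C$, the value $\absolute{y-a_C}_K$ can genuinely depend on the representative. The repair is easy and fits your scheme: when $y\in D_C$ for some $C\in\mathscr{C}_X$, then for every other cluster $C'$ one has $\absolute{y-a_{C'}}_K>\mu(C)\ge\absolute{y-a_C}_K$, so $\mathscr{C}_y=\{C\}$ and $C_{v_y}=\{a_C\}$ regardless of which $a_C$ was chosen; this lands you in Case~1 with the forced assignment $C_y=C\cup\{y\}$. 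When $y$ lies outside every $D_C$, your ultrametric argument goes through verbatim. With this case split inserted, your (i)--(iii) and the induction are correct.
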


\begin{proof}
The outcome of {\em Step $1$} does not depend on $A$.
\end{proof}

\begin{rem}
A consequence of Theorem \ref{plearnthm} is that Algorithm \ref{plearn} does
indeed effect learning in the sense, that to any $y\in Y$ is assigned a
cluster depending on the already existing clusters. Representing a cluster
by a single element makes learning efficient.
\end{rem}

\subsection{Adaptive learning}
During the learning process\footnote{Or if for some reason one wants to
  perform a variation of split-LBG$_p$ in which centers are computed after each
  clustering step, instead of after termination of clustering.}, it can become
useful to subdivide
big clusters of the extended dataset $X\cup Y$. This is not a problem, as
the old cluster centers can be reused in the new clustering.

\begin{lem} \label{subclustcenter}
Let $C$ be a cluster, and $a\in C$ a center of $C$. Assume that $C'$ is a subcluster
of
$C$ containing $a$,  then $a$ is a center of $C'$.
\end{lem}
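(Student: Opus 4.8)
The statement to prove is Lemma \ref{subclustcenter}: if $a$ is a center of a cluster $C$ (minimising $\epsilon(\alpha)=\sum_{x\in C}\absolute{x-\alpha}_K$ over $\alpha\in C$), and $C'\subseteq C$ is a subcluster containing $a$, then $a$ is a center of $C'$ (now minimising $\epsilon'(\alpha)=\sum_{x\in C'}\absolute{x-\alpha}_K$ over $\alpha\in C'$). The plan is to argue by contradiction: suppose $b\in C'$ satisfies $\epsilon'(b)<\epsilon'(a)$, and show this forces $\epsilon(b)<\epsilon(a)$, contradicting the assumed minimality of $a$ in $C$. The key is that the ``extra'' terms $\sum_{x\in C\setminus C'}\absolute{x-\alpha}_K$ contribute the same amount whether $\alpha=a$ or $\alpha=b$.

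First I would invoke the decomposition from Lemma \ref{lem-energy-branch}, or more directly the ultrametric observation underlying it. Since $C'$ is a subcluster of $C$ and both $a,b\in C'$, every point $x\in C'$ lies in the smallest cluster of $D(C)$ containing $C'$; write $v'$ for the corresponding vertex. For any $x\in C\setminus C'$, the datum $x$ branches off the path $v_0\leadsto C'$ strictly above $v'$, so the distance $\absolute{x-\alpha}_K$ is determined by where $x$ branches off, and this is the \emph{same} for $\alpha=a$ and $\alpha=b$ — both lie inside $C'$, hence inside the same disk $D_{v'}$, hence below the branch point of $x$. By the ultrametric (isosceles) inequality, $\absolute{x-a}_K=\absolute{x-b}_K$ for every $x\in C\setminus C'$. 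Therefore
$$
\epsilon(a)-\epsilon(b)=\Bigl(\epsilon'(a)+\!\!\sum_{x\in C\setminus C'}\!\!\absolute{x-a}_K\Bigr)-\Bigl(\epsilon'(b)+\!\!\sum_{x\in C\setminus C'}\!\!\absolute{x-b}_K\Bigr)=\epsilon'(a)-\epsilon'(b).
$$

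Consequently, if $\epsilon'(b)<\epsilon'(a)$ then $\epsilon(b)<\epsilon(a)$, contradicting that $a$ is a center of $C$; hence $a$ minimises $\epsilon'$ on $C'$, i.e.\ $a$ is a center of $C'$. One should also note the degenerate cases: if $C'=\{a\}$ the claim is trivial, and if $C'=C$ there is nothing to prove. The main obstacle — really the only point requiring care — is justifying cleanly that $\absolute{x-a}_K=\absolute{x-b}_K$ for $x\notin C'$; this is exactly the kind of reading-off-the-tree argument already used in the proof of Lemma \ref{lem-energy-branch}, so I would present it in one or two sentences citing the ultrametric property of $\absolute{\ }_K$ and the structure of $D(C)$, rather than belabouring it.
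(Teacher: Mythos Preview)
Your proof is correct and follows essentially the same approach as the paper: decompose $\epsilon(\alpha)$ into the $C'$-part and the $(C\setminus C')$-part, observe that the latter is the same for any two points of $C'$, and conclude. The only cosmetic differences are that the paper argues directly (pick a center $a'$ of $C'$ and show $E_p(C',\mathscr{C}',a)\le E_p(C',\mathscr{C}',a')$) rather than by contradiction, and that the paper justifies the key equality $\absolute{x-a}_K=\absolute{x-a'}_K$ for $x\in C\setminus C'$ in one line by invoking ``the cluster property of $C'$'' rather than reading it off the tree --- your tree argument is fine, but the cluster property (\ref{clusterproperty}) gives it immediately.
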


\begin{proof}
Clearly, it holds true that
\begin{align}
E_p(C,\mathscr{C},a)\le E_p(C,\mathscr{C},a'), \label{trivineq}
\end{align}
where $\mathscr{C}=\mathset{C}$ and $\mathscr{C}'=\mathset{C}$. 
Assume that  $a'\in C'$ is a center of $C'$.
Now, inequality (\ref{trivineq}) implies that
\begin{align*}
E_p(C',\mathscr{C}',a)
+\sum\limits_{x\in C\setminus  C'}\absolute{x-a}_K
&=E_p(C,\mathscr{C},a)
\\
&\le E_p(C,\mathscr{C},a')
\\
&=E_p(C',\mathscr{C}',a')+\sum\limits_{x\in C\setminus
C'}\absolute{x-a'}_K
\end{align*}
Since, by the cluster property of $C'$, it holds true that 
$$
\absolute{x-a}_K=\absolute{x-a'}_K
$$
for all $x\in C\setminus C'$, it follows that
\begin{align}
E_p(C',\mathscr{C}',a)\le E_p(C',\mathscr{C}',a'), \label{lowerthancenter}
\end{align}
and, because $a'$ is a center of $C'$, this yields an equality in (\ref{lowerthancenter})
i.e.\ $a$ is a center of $C'$.
\end{proof}

\begin{rem}
Notice that Lemma \ref{subclustcenter} does not hold true, if we allow $C'$ to
be an arbitrary subset of $C$. E.g.\ assume in Figure \ref{4dendro} that
$C=\mathset{a,b,c,d}$. Then $a$ is a center of $C$, as can be verified from
the left dendrogram. However, $a$ is not a
center of $C'=\mathset{a,c,d}$, as the right dendrogram reveals.
Namely, in the first case, we compute with $\mathscr{C}=\mathset{C}$
and $\mathscr{C}'=\mathset{C'}$:
\begin{align*}
E(C,\mathscr{C},a)&=E(C,\mathscr{C},b)=\absolute{a-b}_K+2\cdot\absolute{a-c}_K\\
&<\absolute{c-d}_K+2\cdot\absolute{a-c}_K=E(C,\mathscr{C},c)=E(C,\mathscr{C},d),
\end{align*}
and in the second case:
\begin{align*}
E(C',\mathscr{C}',c)&=\absolute{a-c}_K+\absolute{d-c}_K\\
&<2\cdot\absolute{a-c}_K=E(C',\mathscr{C}',a).
\end{align*}

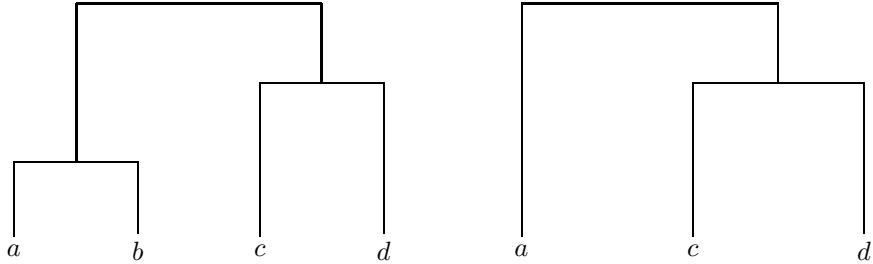
\begin{figure}[h]
$$
\xymatrix@C15pt{
&*\txt{}\ar@{-}[rrrr]\ar@{-}[dd]&&&&*\txt{}\ar@{-}[d]&\\
&&&&*\txt{}\ar@{-}[dd]&*\txt{}&*\txt{}\ar@{-}[dd]\ar@{-}[ll]\\
*\txt{}\ar@{-}[d]\ar@{-}[rr]&*\txt{}&*\txt{}\ar@{-}[d]&&&&\\
a&&b&&c&&d
}
\quad\quad\quad\quad
\xymatrix{
*\txt{}\ar@{-}[rrr]\ar@{-}[ddd]&&&*\txt{}\ar@{-}[d]&\\
&&*\txt{}\ar@{-}[dd]&*\txt{}&*\txt{}\ar@{-}[ll]\ar@{-}[dd]\\
&&&&\\
a&&c&&d
}
$$
\caption{Dendrogram and subdendrogram.}\label{4dendro}
\end{figure}
\end{rem}

At last, we propose the splitting of high-energy clusters accumulated during
the learning process:

\begin{alg}\rm
{\em Input.} $r\ge 0$. Otherwise, as in Algorithm \ref{plearn}.

\smallskip\noindent
Perform Algorithm \ref{plearn} with modification:

\medskip\noindent
{\em Step $N'$.}
Perform {\em Step $N$}. If for $y:=y_N$ it holds true that $E(C_{y})>r$,
then split cluster $C_{y}$ into its maximal proper
subclusters, 
and adjoin to $A_N$ new cluster centers using Algorithm \ref{centerclustp}.
\end{alg}

\section{Conclusion}

A straightforward translation of the split-LBG algorithm to the
situation of classifying $p$-adic
data does not exist. However, if clusterings,  cluster centers and their numbers are
allowed to vary, then the minimisation problem for the $p$-adic energy
functional defined by distances to centers does make sense. 
Sub-optimal
algorithmic solutions to the minimisation problem are presented, in which the choice lies
in whether or not to remove in each step
 quasi-singletons, i.e.\ clusters which are almost singletons because of  their
energy values being lower than a given threshold. The method is
 to find rankings of vertices in the dendrogram associated to the $p$-adic data.
The outcome
 depends on the prime number $p$, but it is shown that for all but  finitely
 many primes the rankings are identical.  
The consequence for applications to data anlaysis is that for fixed prime
$p$,
the classification results do not depend on the $p$-adic representation
of the data, as long as the dendrograms are isomorphic.
Furthermore, the minimising property for
given cluster centers holds true  independently of the prime. This means
that if some datum is a cluster center for one prime, it is a cluster center
for all primes (for which the corresponding cluster is not larger).
Using $p$-adic cluster centers,
one can construct classifiers from given clusterings. This can be applied to
learning situations.

\section*{Acknowledgements}

The author acknowledges support from 
 the DFG-project BR 3513/1-1. Andrei Khrennikov is warmly thanked for
 directing the author's attention to
\cite{B-PXK2001} where the issue of a
$p$-adic form of split-LBG
is raised. Thanks to Fionn Murtagh for remarks leading to improving the
exposition of the article. The anonymous referee is thanked for pointing
out various inaccuracies and errors.
The Institut f\"ur Photogrammetrie und Fernerkundung at University
Karlsruhe
is thanked for the opportunity to write down this article.

\end{document}